\newcommand{\bQ}{Q^\theta}
\newtheorem*{theorem*}{Theorem}
\def\1{\bm{1}}
\DeclareMathAlphabet{\mathsfit}{\encodingdefault}{\sfdefault}{m}{sl}
\SetMathAlphabet{\mathsfit}{bold}{\encodingdefault}{\sfdefault}{bx}{n}
\def\gA{{\mathcal{A}}}
\def\gR{{\mathcal{R}}}
\def\gS{{\mathcal{S}}}
\newcommand{\E}{\mathbb{E}}
\newcommand{\R}{\mathbb{R}}
\newcommand{\Var}{\mathrm{Var}}
\newcommand{\Ac}{\mathcal{A}}
\DeclarePairedDelimiter\innorm{\langle}{\rangle}
\newtheorem{definition}{Definition}[section]
\newtheorem{proposition}{Proposition}[section]
\newtheorem{corollary}{Corollary}[section]
\newtheorem{theorem}{Theorem}[section]
\newtheorem{property}{Property}[section]
\newtheorem{lemma}{Lemma}[section]
\newcommand{\lse}{\textsc{lse}}
\newcommand{\kl}{\textsc{kl}}
\newcommand{\hsc}{\textsc{h}}
\newcommand{\ttq}{\mathtt{q}}
\newcommand{\wlse}{\textsc{wlse}}
\newcommand{\Treg}[2]{T^{#1,#2}}
\newcommand{\Trob}[2]{T^{#1}_{#2}}
\newcommand{\children}{\mathrm{Ch}}
\title{Discrete Compositional Generation via General Soft Operators and Robust Reinforcement Learning}
\author{%
   Marco Jiralerspong\thanks{Correspondence to \texttt{marco.jiralerspong@mila.quebec}}\hspace{0.5em}$^{1,2}$,
  Esther Derman$^{1,2}$,
  Danilo Vucetic$^{1,2}$,
  Nikolay Malkin$^{3,4}$,\\
  \textbf{Bilun Sun}$^{5}$,
  \textbf{Tianyu Zhang}$^{1,2}$,
  \textbf{Pierre-Luc Bacon}$^{1,2,4}$,
  \textbf{Gauthier Gidel}$^{1,2,4}$ \\
  \\
  $^{1}$Université de Montréal \quad
  $^{2}$Mila \quad
  $^{3}$University of Edinburgh \quad
  $^{4}$CIFAR \quad
  $^{5}$Independent
}
\begin{document}

\maketitle

\begin{abstract}
A major bottleneck in scientific discovery consists of narrowing an exponentially large set of objects, such as proteins or molecules, to a small set of promising candidates with desirable properties. While this process can rely on expert knowledge, recent methods leverage reinforcement learning (RL) guided by a proxy reward function to enable this filtering. By employing various forms of entropy regularization, these methods aim to learn samplers that generate diverse candidates that are highly rated by the proxy function. In this work, we make two main contributions. First, we show that these methods are liable to generate overly diverse, suboptimal candidates in large search spaces. To address this issue, we introduce a novel unified operator that combines several regularized RL operators into a general framework that better targets peakier sampling distributions. Secondly, we offer a novel, robust RL perspective of this filtering process. The regularization can be interpreted as robustness to a compositional form of uncertainty in the proxy function (i.e., the true evaluation of a candidate differs from the proxy's evaluation). Our analysis leads us to a novel, easy-to-use algorithm we name trajectory general mellowmax (TGM): we show it identifies higher quality, diverse candidates than baselines in both synthetic and real-world tasks. Code: \url{https://github.com/marcojira/tgm}.

\end{abstract}

\vspace{-0.5em}
\section{Introduction}
\vspace{-0.5em}
\looseness=-1
The task of scientific discovery centers on discovering novel objects with desirable properties such as antimicrobial resistance or binding affinity. These objects are often discrete and structured such that they can be constructed through a sequence of compositional steps, e.g., proteins as sequences of amino acids, molecules composed of various smaller fragments \citep{angermueller2019model, wang2023scientific}. Due to this compositionality, the number of possible objects grows exponentially with object size and becomes too large to search exhaustively. Additionally, experimentally evaluating potential objects can be too technical or too expensive. Nonetheless, in many cases, researchers have access to a proxy reward model that quantitatively approximates the satisfaction level of the property of interest \citep{davis2018efficient, du2022molgensurvey}.

A large body of work leverages this proxy reward function to narrow the set of objects to a small subset of promising candidates to test in a lab \citep{currin2015synthetic, gubernatis2018machine, wu2021protein, ai4science2023crystal}. An alternative approach that has demonstrated growing popularity and effectiveness is to pose this problem as an RL task \citep{de2018molgan, darvariu2021goal, bou2024acegen}. In what we will denominate discrete compositional processes (DCP), object construction is viewed as a Markov decision process (MDP) where terminal states correspond to complete objects and yield a reward given by the evaluation of that object by the proxy reward function. Then, the learned policy can be used as a generative model that samples promising candidates.

\looseness=-1
Among these approaches, generative flow networks (GFNs) have shown considerable success \citep{bengio2021flow, jain2023gflownets, zhou2023phylogfn, cipcigan2024discovery}. GFNs learn an amortized policy and, at optimality, sample objects with probability proportional to the exponential of their reward (Eq.~\ref{eq:prop_sampling}). Connections with maximum-entropy RL have been established, showing that this objective underpins the diversity of generated samples \citep{tiapkin2024generative, mohammadpour2024maximum, deleu2024discrete}. While this exact sampling goal is useful in certain applications, it is problematic in exponentially large sets: the probability of sampling from a small set of objects with high rewards is dominated by the probability of sampling from an exponentially large set of objects with low rewards. In this work, we leverage the connection between GFNs and regularized RL to use soft RL operators for scientific discovery. By combining these approaches, we derive an improved operator for the task and offer a new perspective on regularization. Our contributions can be summarized as follows.



\paragraph{Our contributions:}
(1) We introduce \textbf{general mellowmax}, an interpolating operator that allows for a more flexible tradeoff between quality and diversity than previous operators; (2) We derive a trajectory constraint for this operator, which we refer to as \textbf{trajectory general mellowmax} (TGM). TGM enables us to propose a GFN-like algorithm for scientific discovery tasks that empirically finds diverse modes with higher reward than soft mellowmax, GFNs, and other RL baselines;
(3) We provide a robust RL interpretation of these baselines in DCPs. Specifically, we show that the reward uncertainty sets induced by TGM are more interpretable than those associated with entropy regularization or GFNs. The analytical techniques we employ pave the way to more general trajectory constraints through an interpretable equivalence between regularization and reward robust sets.

\vspace{-0.5em}
\section{Related work}
\vspace{-0.5em}

\label{sec: related work}
The goal of sampling proportional to reward targeted by GFNs was historically tackled using Markov chain Monte Carlo (MCMC) \citep{gilks1995markov}. These methods construct chains through a random walk: from a set of initial points, new points are proposed and either accepted or rejected based on their function value (here, their proxy reward). While theoretically appealing, these methods struggle in high dimensions, in particular when it comes to discovering separate modes \citep{bengio2021flow}. 



\looseness=-1
On the other hand, GFNs have demonstrated success in protein design \citep{jain2022biological}, drug discovery \citep{shen2023tacogfn}, and material design \citep{ai4science2023crystal}. The standard way to tackle the excessive smoothness of GFN policies is through temperature conditioning \citep{zhang2023robust, kim2023learning, zhou2023phylogfn}: the GFN target can be posed as learning $p(x) \propto e^{\beta r(x)}$ where $\beta$ is an inverse temperature parameter. Temperature conditioning involves learning a policy conditionally on $\beta$, with $\beta$ varying throughout training. It complements our approach, as temperature conditioning can be used on top of our framework. \cite{mohammadpour2024decoupling} have also proposed a dynamic regularization coefficient based on the number of actions, for maximum entropy RL. This change can also be used to address excessive smoothness in GFNs or incorporated through a varying $\omega_s$ in our Eq.~\ref{eq:interpolated_regularizer}.

From an RL perspective, many soft RL methods fit in the regularized framework of \cite{geist2019theory}. Other forms of regularization and their corresponding operators include the mellowmax operator \citep{asadi2017alternative},  which takes the logmeanexp of Q-values instead of the logsumexp, and its extension, the soft mellowmax operator \citep{gan2021stabilizing}. \cite{jiralerspong2024expectedflow} also combine a GFN perspective with RL by developing an adversarial version of the mellowmax operator called AFlowNets. None of these works use their operators for DCPs, whereas we generalize them, integrate them with trajectory constraints, and successfully apply these constraints to real DCP tasks.
\vspace{-0.5em}
\section{Background and Motivation}
\vspace{-0.5em}

\label{sec:background}
\looseness=-1

\begin{figure}[t]
    \centering
        \vspace{-1cm}
    \begin{subfigure}[b]{0.34\textwidth}
        \centering
        \includegraphics[width=\textwidth, height=3.6cm]{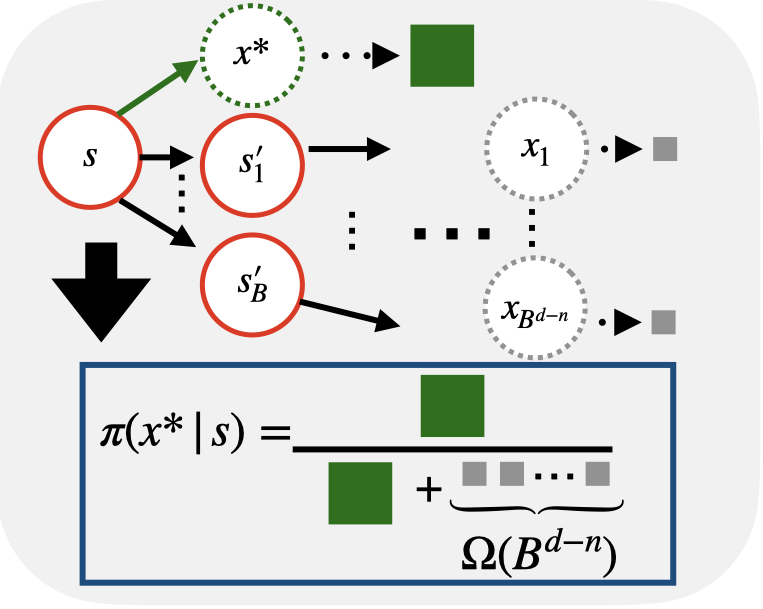}
        \label{fig:plot2}
    \end{subfigure}
    \hfill
    \begin{subfigure}[b]{0.65\textwidth}
        \centering
        \includegraphics[width=\textwidth, height=3.6cm]{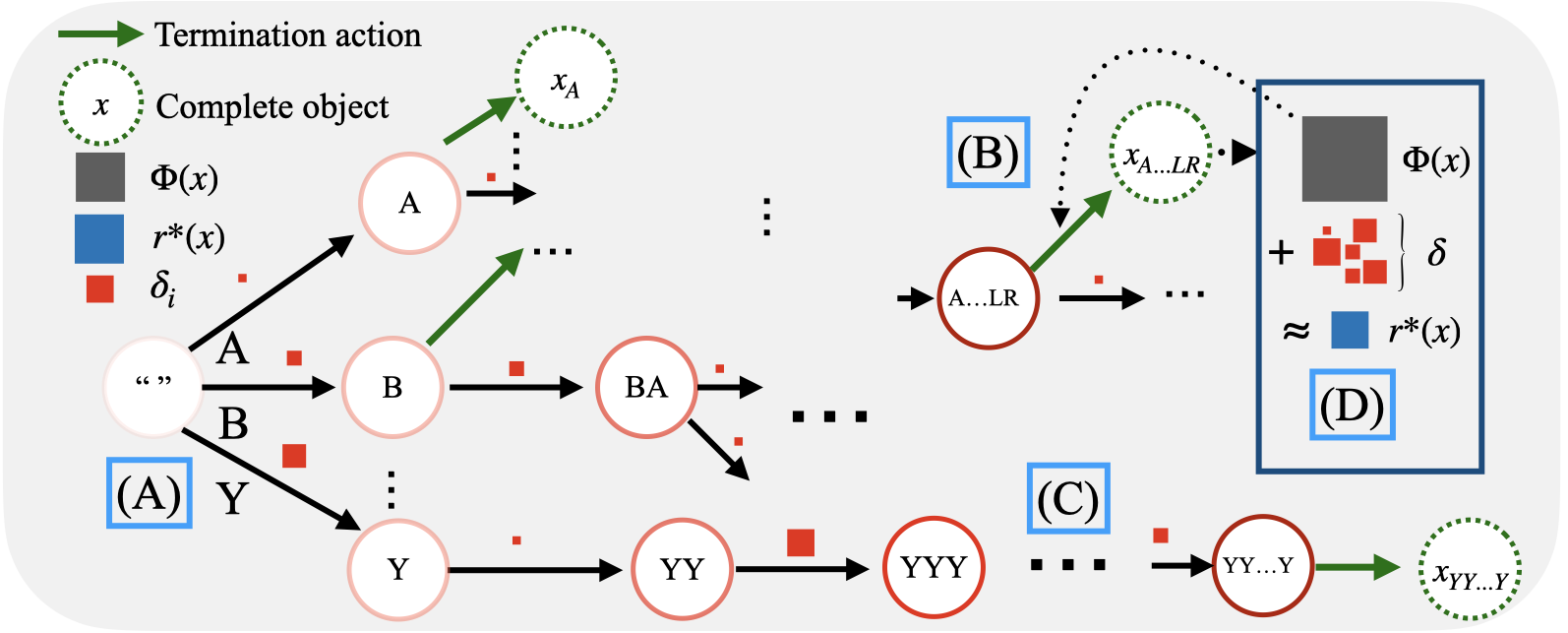}
        \label{fig:plot1}
    \end{subfigure}
    \vspace{-1cm}
    \caption{\small 
    (Left) Illustrated issue with sampling proportional to reward. The much larger reward of the optimal sequence (in green) is drowned out by all the rewards in the combinatorial explosion of longer subsequences. (Right) A DCP for a protein sequence generation task. (A) Starting from an empty string, amino acids are added sequentially until termination. (B) Then, the full sequence is evaluated by a proxy reward function $\Phi(x)$ whose value is given as reward for the termination action. (C) Over the course of protein generation, uncertainty accumulates through the $\delta_i$. (D) The true reward depends on $\Phi(x)$ \textit{and the accumulated uncertainty}.}
    \label{fig:DCP}
        \vspace{-3mm}
\end{figure}


\textbf{MDPs.} An MDP is a tuple $(\gS, \gA, \gamma, P, r)$ where $\gS$ and $\gA$ are finite state and action spaces respectively, $\gamma\in [0,1)$ is a discount factor, $P:\gS\times\gA\to\Delta_{\gS}$ a transition kernel and $r:\gS\times\gA\to \mathbb{R}$ a reward function. Define $\Pi$ as the set of policy mappings $\pi:\gS\to\Delta_{\gA}$. The goal is to find $\pi\in\Pi$ that maximizes the value function
    $v^{\pi}_r(s) := \E^{\pi}[\sum_{t=0}^{\infty}\gamma^tr(s_t, a_t)\mid s_0=s], \;\forall s\in\gS.$
The dependence of the value function on the reward function is made explicit through the subscript $r$, as clarified in the next paragraph.
For any policy $\pi\in\Pi$, the expected reward is $r^{\pi}(s):= \sum_{a\in\gA}\pi_s(a)r(s,a)$ and the expected transition is $P^{\pi}(s,s'):= \sum_{a\in\gA}\pi_s(a)P(s,a,s')$. The evaluation Bellman operator given by $T^{\pi}_rv:= r^{\pi} + \gamma P^{\pi}v$ for all $v\in\R^{\gS}$ is known to be contracting, admitting $v^{\pi}_r$ as a fixed point. Similarly, the optimal value function $v^*_r(s):= \max_{\pi\in\Pi}v^{\pi}_r(s)$ is the fixed point of the optimal Bellman operator $T_r^*v(s) := \max_{\pi\in\Pi}T^{\pi}_rv(s), \forall v\in\R^{\gS}, s\in\gS$. An optimal policy can be derived from iterative Bellman updates, which form the building block of RL~\citep{puterman2014markov}. 

\textbf{Robust MDPs.} The true reward or transition function of an MDP is rarely known in practice. It may be estimated from trajectory data, but a small error on the MDP model can alter policy performance \citep{mannor2004bias}. The robust MDP setting addresses this issue by assuming that $(P,r)$ is unknown, lying in a given uncertainty set. In our applicative setting of DCP where transitions are fully determined by the performed actions, we reasonably assume that the reward model $r\in\gR$ is the only uncertain element. Then, we aim to maximize performance for the worst-case model, namely:
\begin{align}
\notag
    v^{\pi}_{\gR}(s):= \min_{r\in\gR}v^{\pi}_r(s),\qquad\forall s\in\gS.
\end{align}
To solve this max-min problem, one can resort to robust Bellman operators $T^{\pi}_{\gR}v:= \min_{r\in\gR}T^{\pi}_{r}v$ and $T^*_{\gR}v:= \max_{\pi\in\Pi}T^{\pi}_{\gR}v, \;\forall v\in\R^{\gS}$. Indeed, both are contracting and admit the robust value $v^{\pi}_{\gR}$ and the optimal robust value $v^*_{\gR}$ as fixed points, respectively \citep{iyengar2005robust, wiesemann2013robust}.  

\textbf{Regularized MDPs} provide a general framework for regularization in RL, recovering celebrated algorithms such as soft Q-learning \citep{haarnoja2017reinforcement}.  
A regularized MDP is an MDP $(\gS, \gA, \gamma, P, r)$ combined with a family $\Omega:= (\Omega_s)_{s\in\gS}$ of convex functions $\Omega_s:\Delta_{\gA}\to \R$. At each state $s\in\gS$, $\Omega_s$ defines a policy regularizer $\Omega_s(\pi_s)$, for $\pi_s\in\Delta_{\gA}$. The regularized Bellman operator is given by: 
\begin{align*}
    [\Treg{\pi}{\Omega} v](s) := T_r^{\pi}v(s) - \Omega_s(\pi_s),\qquad \forall v\in\R^{\gS}, s\in\gS. 
\end{align*}
and its greedy equivalent by $[\Treg{*}{\Omega} v](s) := \max_{\pi_s\in\Delta_{\gA}} [\Treg{\pi}{\Omega} v](s)$ \citep{geist2019theory}. \cite{derman2021twice} have established an equivalence between policy-regularized and robust Bellman operators, thus highlighting a formal motivation for regularized RL. The statement below is a direct reformulation of \citep[Thm. 3.1]{derman2021twice}. 

\begin{theorem}[\cite{derman2021twice}]
\label{thm: reg =  reward robust}
    Assume that the reward function $r$ is uncertain and satisfies $r_s\in \gR_s:=  r_0(s,\cdot) + \tilde{\gR}_s$, where $\tilde{\gR}_s\subseteq [-R,R]^{\gA}$ is closed and convex for all $s\in\gS$. Then, for any $\pi\in\Pi$, the robust value function $v^{\pi}_{\gR}$ is the fixed point of the regularized Bellman operator $\Treg{\pi}{\Omega}$ with $\Omega_s(\pi_s):= \max_{r_s\in\tilde{\gR}_s}\innorm{-\pi_s, r_s}$. In other words, it holds that
$v^{\pi}_{\gR}(s) = T_{r_0}^{\pi}v^{\pi}_{\gR}(s)  - \Omega_s(\pi_s), \,\forall s\in\gS$. 
\end{theorem}


\textbf{Discrete compositional processes (DCP)} are special instances of standard MDPs with deterministic transitions. They can fully be described by a tuple $(\mathcal{G}, \Phi)$, where $\mathcal{G}$ is a directed acyclic graph with a single source state $s_0$ and a set of terminal states $\mathcal{X}$ (see Fig.~\ref{fig:DCP}). Each node in the graph corresponds to a set of parts, $s_0$ being the empty set. An edge $(s \to s')$ represents either adding a part to the set $s$ to get $s'$ or a termination action if $s' \in \mathcal{X}$. Each completed object $x\in\mathcal{X}$ has an associated reward $\Phi(x)$ given by the proxy reward model $\Phi: \mathcal{X} \to \mathbb{R}$.

In standard RL, the goal is to maximize the discounted cumulative reward. Instead, for scientific discovery, we are interested in finding the most promising set of $k$ candidates that are sufficiently distinct from one another~\citep{jain2022biological}. In practice, an imperfect but computationally efficient proxy score $\Phi$ is used to estimate the usefulness of each candidate, so naively maximizing usefulness can be problematic. Instead, given some metric $d$ between completed objects, practitioners aim to approximately find a diverse and novel (w.r.t. the training set) set of candidates $\{x_1, \ldots x_k\}$ with a high score~\citep{jain2022biological}. This goal can be formalized as maximizing the \emph{average mode reward} (for simplicity, we only formalize the diversity constraint): 
\begin{equation}
    \max_{\{x_1, ... x_k \} \subset \mathcal{X}} \frac{1}{k}\sum_{i=1}^k e^{\Phi(x_i)} \quad  \text{ subject to } \quad  d(x_i, x_j) > \delta, \quad \forall i \neq j\,.
    \label{eq:diversity}
\end{equation}


\subsection{Limitations of GFlowNets in scientific discovery}
\label{sec:motivations}
\looseness=-1
Under a GFN perspective, the exponential of the proxy reward function can be viewed as an unnormalized probability mass. Then, the goal is to learn a per-state policy $\pi_s$ whose sequential application samples objects proportionally to this function. Formally, if we denote by $\mathcal{T}(x)$ all trajectories in $\mathcal{G}$ ending at terminal state $x\in\mathcal{X}$ and $\pi_{\tau(x)}$ the probability of picking all edges from a trajectory $\tau(x)$ under policy $\pi$, the objective is to match the following distribution: 
\begin{equation}
\label{eq:prop_sampling}
    p(x) := \sum_{\tau(x) \in \mathcal{T}(x)} \pi_{\tau(x)} \propto e^{\beta \Phi(x)}, \qquad \forall x \in \mathcal{X}.
\end{equation}
where $\beta$ is the inverse temperature. We argue this objective is suboptimal for solving Eq.~\ref{eq:diversity}.

\textbf{Motivating example.}
The main issue with Eq.~\ref{eq:prop_sampling} is that it yields a distribution assigning higher probability to a large number of suboptimal reward objects rather than to a small number of high-reward objects. To demonstrate this, consider a DCP (illustrated in Fig.~\ref{fig:DCP}) where the goal is to generate sequences of maximum length $d$ from a vocabulary of size $B$. Suppose a sampler is at an optimal sequence $s^*$ of length $n < d$. The sampler's policy then needs to decide whether to terminate and return $x^*$, or to add more tokens and harm the usefulness of the sequence. In doing so, it weighs $e^{\Phi(x^*)}$ against the reward of all sequences starting with $s$. There are $\Omega(B^{d-n})$ such sequences. Suppose the sampler perfectly matches (\ref{eq:prop_sampling}) and these sequences have rewards lower bounded by $r>0$. Then, $\pi(x^*\mid s) \leq \nicefrac{e^{\Phi(x^*)}}{r\cdot B^{d-n}},$ and the probability of returning the optimal sequence at $s$ decreases exponentially in $d-n$ . The problem occurs in many applications, as \emph{almost all objects have non-negligible reward} (see Fig.~\ref{fig:reward_dist}). When these empirical distributions are representative, GFNs may conservatively estimate the probability of sampling a promising candidate.


\begin{figure}
    \vspace{-3mm}
    \centerline{\includegraphics[width=1\textwidth]{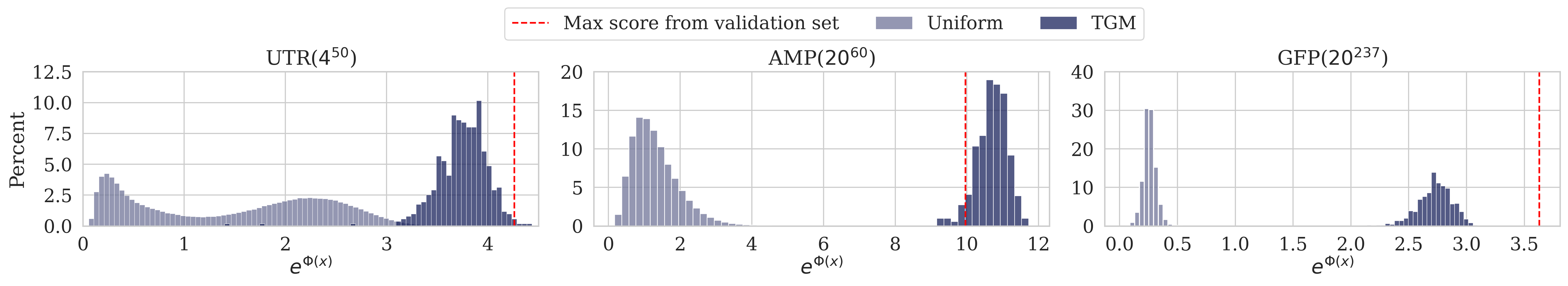}}
    \caption{\small
    Distribution of exponential rewards from 1 million uniformly randomly drawn samples for various tasks compared to the distribution sampled by TGM at the end of training.}
    \label{fig:reward_dist}
    \vspace{-4mm}
\end{figure}

\vspace{-0.5em}
\section{Alternative Operators}
\vspace{-0.5em}

To address this limitation, we propose taking a regularized RL perspective and modifying the operator used by GFNs. To simplify the discussion, we examine the case where $\mathcal{G}$ in the DCP is a tree. In such settings, there is a direct equivalence between the GFN flow operator and the soft Bellman operator with temperature $\omega=1$. Denoting $Q_s[a] := v_{s_a} + r(s,a)$ (with $s_a$ being the state reached by taking action $a$ from $s$) and the vector of Q-values at a state by $\mathbf{Q_s}$, the soft Bellman operator is given by: 
\begin{equation}
T^{\mathrm{SB}}v_s := \frac{1}{\omega}\mathrm{logsumexp} (\omega\mathbf{Q_s}).    
\end{equation}
Recursively applying this operator in a DCP, as you go from the leaves of $\mathcal{G}$ to the root, causes value to accumulate. At optimality, the value of a state is the \textit{logsumexp} of the rewards of all objects that can be reached from that state. The accumulation of the value of a large number of suboptimal objects can overwhelm one high-reward object.

Instead of taking the sum, we could take the \textit{logmeanexp} of the Q-values. Doing so yields the mellowmax operator \citep{asadi2017alternative}. While the mellowmax operator solves the accumulation issue, it is liable to the opposite issue, dilution. Due to this averaging, the high reward of an object deep in the tree can get diluted by lower reward objects as we approach the root (since we divide by the number of children at each state).
\begin{equation}
    T^{\mathrm{MM}}v_s := \frac{1}{\omega}\log \sum_{a \in \children(s)} \frac{1}{|\children(s)|}e^{\omega Q_s(a)}.
\end{equation}
Luckily, we can alleviate the dilution issue by weighting the elements of the sum proportionally to their value. In particular, we can use the softmax (controlled by $\alpha$) of the Q-values as weights (instead of an equal weighting) and thus obtain the soft mellowmax operator \citep{gan2021stabilizing}: 
\begin{equation}
T^{\mathrm{SMM}}v_s = \frac{1}{\omega}\log \langle \mathrm{softmax}(\alpha\mathbf{Q_s}), e^{\omega \mathbf{Q_s}} \rangle.
\end{equation}

Both accumulation and dilution consist of deviations between the application of the operator and the max Q-value: $\Delta_s^T:= Tv_s - \max_{a \in \children(s)}Q(s,a)$. Formally, we define them as follows:
\begin{equation}
    \mathrm{Acc}^T(s) :=\max (\Delta_s^T, 0) \quad \mathrm{Dil}^T(s):=\max (-\Delta_s^T, 0).
\end{equation}
Respectively, they represent the increase or decrease in the value of a state relative to its maximum Q-value. Ultimately, for scientific discovery applications, the ideal operator/associated policy would assign probability to all actions \textit{while} being resistant to accumulation and dilution. 

We propose a new operator that interpolates between the accumulation of the soft Bellman operator and the dilution of the soft mellowmax operator with a hyperparameter $\ttq$. Values of $\ttq$ in $(0, 1)$ have a small amount of dilution and accumulation instead of a large amount of either. Consequently, the resulting operator trades off both issues and has better worst-case bounds on $|\Delta_s^T| = \max\{\mathrm{Acc}^T(s),  \mathrm{Dil}^T(s)\}$, as illustrated in Tab.~\ref{tab:accum_dilution}.


\begin{table}[t]
    \centering
    \begin{tabular}{lccccc}
        & \textbf{GFN} & \textbf{Soft Bellman} & \textbf{MM} & \textbf{SMM} & \textbf{GM (ours)} \\
        \midrule
        \textbf{Accumulation} & $\log(k)$ & $\frac{\log(k)}{\omega}$ & $0 $& $0$ & $\frac{(1-\ttq)\log(k)}{\omega}$ \\
        \addlinespace[0.5em]
        \textbf{Dilution} & $0$ & $0$ & $\frac{\log(k)}{\omega}$ & $\frac{\log(k)}{\alpha + \omega}$& $\frac{\ttq \log(k)}{\ttq\alpha + \omega}$ \\        
        \addlinespace[0.5em]
        $|\Delta_s^T|$ & $\log(k)$ & $\frac{\log(k)}{\omega}$ & $\frac{\log(k)}{\omega}$ & $ \frac{\log(k)}{\alpha + \omega}$&  $\max\left(\frac{(1-\ttq)\log(k)}{\omega}, \frac{\ttq \log(k)}{\ttq\alpha + \omega}\right)$ \\

        \bottomrule
    \end{tabular}
    \caption{Worst-case accumulation, dilution and $|\Delta_s^T|$ for different operators when $\frac{\ttq \alpha+\omega}{\alpha}> 1$. $k$ is the number of actions at a given state, $\omega$ is the regularization coefficient, $\alpha$ is the softmax inverse temperature and $\ttq$ is the GM interpolation factor. For the same values of $\alpha, \omega, k$, general mellowmax with $\ttq \in (0, 1)$ trades off accumulation and dilution to get a lower worst-case $|\Delta_s^T|$ than the other operators. See App.~\ref{appx:accum_dilution} for proofs.
    }
    
    \label{tab:accum_dilution}
\end{table}

\subsection{General Mellowmax}
The operators mentioned above can be viewed through the lens of regularized MDPs. On one hand, the soft Bellman operator corresponds to regularizing by the Shannon entropy. On the other hand given a distribution $d_s\in\Delta_{\gA}$, the mellowmax/soft mellowmax operators result from regularization by $\kl(\pi_s, d_s)$ where $d_s$ is the uniform distribution or $\mathrm{softmax}(\alpha \mathbf{Q_s})$ respectively. 

We define the \textbf{general mellowmax (GM)} regularizer by interpolating between these regularizers. Then, we can recover these operators as special cases and also trade off their respective effects. We denote the general regularizer by $\Omega^\ttq_{d_s}$ and the case $d_s=\mathrm{softmax}(\alpha \mathbf{Q_s})$ by $\Omega_\alpha^\ttq$.
\begin{equation}
    \label{eq:interpolated_regularizer}
    \Omega^{\ttq}_{d_s}(\pi_s) := \frac{1}{\omega} \left[ \ttq \kl(\pi_s,d_s)  + (1-\ttq) (-\hsc(\pi_s))\right]\,, \;\quad\forall s\in\gS, \pi_s\in\Delta_{\gA},\omega\in \R_{>0}, \ttq\in [0,1].
\end{equation}

Notably, unlike other regularizers, $\Omega_\alpha^\ttq$ depends on the Q-values of the current state. Consequently, the associated operator is not a convex conjugate and deviates from the framework of \citep{geist2019theory}. Nonetheless, we can still derive the associated operator using similar techniques (see App.~\ref{appx:tgm_theory}):
\begin{equation}
    \Omega^{\ttq, *}_\alpha (\mathbf{Q_s}) = \frac{1}{\omega} \log \langle \mathrm{softmax}{( \alpha}\mathbf{Q_s}))^\ttq, e^{\omega Q_s} \rangle.
\end{equation}
Using the regularized interpretation, $\ttq$ trades off between maximizing the policy's entropy and having a policy that is close to the softmax of the Q-values. $\alpha$ allows us to put more or less weight on actions with high Q-values while $\omega$ controls the weight of regularization. In particular, setting $\ttq = 0$ allows us to recover the standard entropy-regularized / GFN operators and policy. On the other hand, by setting $\ttq = 1$, we recover the soft mellowmax operator (with $\alpha = 0$ we recover the mellowmax operator). The interpolated operator captures three important soft RL operators and allows us to smoothly control the tension between accumulation/dilution using $\ttq$ and $\alpha$.



We also observe that $\Omega^{\mathtt{q}}_{d_s}$ is equivalent to a KL between a policy and a tilted softmax, as stated below.
\begin{proposition}
\label{prop: tilted}
 For all $\ttq\in [0,1]$,
$\Omega^{\mathtt{q}}_{d_s}(\pi_s) = \frac{1}{\omega}\kl(\pi_s, d_s^{(\ttq)}) -\frac{1}{\omega}\log(Z_{\ttq}(d_s)),$
 where $Z_{\ttq}(d_s) := \sum_{a\in\gA} d_s(a)^{\ttq}$ and $d_s^{(\ttq)} = d_s^{\ttq}/Z_{\ttq}(d_s)$ is the $\ttq$-tilted softmax distribution.
\end{proposition}

\subsection{Trajectory general mellowmax}
Instead of training on transitions, it is possible to leverage trajectory-level constraints that relate policy/value functions over multiple steps to recover training algorithms that use subtrajectories as data units. In soft RL, this connection was first formalized in \cite{nachum2017bridging} through path consistency learning (PCL) for maximum entropy RL. Path consistency objectives have since been derived for Tsallis entropy \citep{chow2018path} as well as the general class of $\alpha$-divergences \citep{brekelmans2022your}.

Trajectory-level constraints have been shown to be crucial to the performance of methods in DCP tasks. Trajectory balance (TB) \citep{malkin2022trajectory} and subtrajectory balance \citep{madan2023learning} are consistently used in GFNs, as DCP tasks only have terminal rewards. By connecting the policy at early states to this terminal reward, these constraints are noticeably better at propagating signal and thus improving credit assignment. 

\begin{theorem}[Trajectory GM]
    \label{thm:traj_gm}
    For any DCP $(\mathcal{G}, \Phi)$, let $(\pi^*, Q^*)$ be the unique optimal value/policy functions for $\Omega^{\ttq, *}_\alpha$. Then, for a given Q-function $Q^\theta$, the two following statements are equivalent 
    \begin{equation}
        \label{eq:policy_optimality}
        \mathrm{softmax}([\alpha \ttq + \omega]Q^\theta_{s}) = \arg \max_{\pi_s} \langle{\pi_s, Q_s^*} \rangle - \tfrac{1}{\omega}\Omega_{\alpha}^\ttq(\pi_s),
        \; \text{for all states } s \in \mathcal{G}\,. 
    \end{equation}
    \begin{equation}
        \left.
        \begin{aligned}
        \label{eq:traj_consistency}
        &v_0^* + \sum_{i=0}^n \frac{1}{\omega} \bigg( \log \mathrm{softmax}([\alpha \ttq + \omega]Q^\theta_{s_i})[a_i] - \ttq \log \mathrm{softmax}(\alpha Q^\theta_{s_i})[a_i] \bigg) = r(x),\;\\
        &\text{for all trajectories } s_0 \xrightarrow{a_0}\cdots \xrightarrow{a_{n-1}} s_n \xrightarrow{a_{n}} x \text{ in the DCP.}
        \end{aligned} \right\}
    \end{equation}
\end{theorem}

We derive a novel, equivalent trajectory constraint for the general mellowmax operator. In particular, Thm~\ref{thm:traj_gm} explicits that satisfying the trajectory constraint Eq.~\ref{eq:traj_consistency} on all trajectories yields $Q_\theta$ such that $\mathrm{softmax}([\alpha \ttq + \omega]Q_{\theta})$ is the optimal policy for the regularized problem. See App.~\ref{appx:traj_gm_proof} for the proof.


\paragraph{Trajectory general mellowmax (TGM).} We now instantiate the interpolated regularizer as a practical algorithm for use in DCPs. Specifically, we make three design decisions.
\begin{enumerate}[left=0pt,nosep,label=(\arabic*)]
    \item Using the general mellowmax operator with $d_s = \mathrm{softmax}(\alpha Q_s)$. Doing so allows us to use TGM both on-policy and off-policy and does not require a separate $d_s$. Instead, the policy seeks to maximize reward while being close to a softmax of the learned Q-values.
    \item Aiming to satisfy a novel trajectory constraint derived in App.~\ref{appx:traj_gm_proof}. Doing so recovers the benefits of trajectory constraints for DCPs. 
    \item Training a single network $Q^\theta$ of Q-values through the VarGrad objective of \cite{richter2020vargrad, zhang2023robust}. It can be shown that minimizing the trajectory constraint Eq.~\ref{eq:traj_consistency} is equivalent to minimizing the following loss (where $\sigma_{t}$ denotes the softmax with inverse temperature $t$):
    \begin{equation*}
        \mathcal{L}_{\text{TGM}, Q_\theta}(\tau) = \Var \left[\frac{1}{\omega} \left(\sum_{i=1}^n \log \sigma_{\ttq \alpha + \omega} [Q^\theta_{s_i}(a_i)] - \ttq \log \sigma_\alpha[Q^\theta_{s_i}(a_i)]\right) - \beta r(x) \right].
    \end{equation*}
\end{enumerate}
From $Q^\theta$, the optimal policy is easily computed. The resulting algorithm is easy to implement, efficient (particularly for transformers), and is equivalent to GFN training when $q=0$, $\omega=1$.

\vspace{-0.5em}
\section{Discrete compositional generation via robust RL}
\vspace{-0.5em}
\looseness=-1
In this section, we offer a robust RL interpretation of regularized operators for DCPs. As described in Thm.~\ref{thm: reg =  reward robust}, regularized MDPs are known to be equivalent to robust MDPs with uncertain reward. In the context of scientific discovery, since the reward is the evaluation of objects by a proxy function, we can conceptually consider the existence of a \emph{hidden true reward} $r^*$ that we would like to maximize\footnote{This reward would correspond to the effective properties of interest measured in a lab.}.  The filtering process can be interpreted as an attempt to robustly maximize the proxy reward $\Phi$, accounting for the difference $\delta$ between $\Phi$ and $r^*$. Thus, the goal is to solve:
\begin{equation}
    \label{eq:maxmin}
    \max_{p \in \Delta_\mathcal{X}} \min_{\delta \in \mathcal{R}} \mathbb{E}_{x\sim p}[\Phi(x) + \delta(x)]
\end{equation}
where $\mathcal{R} \in \mathbb{R}^{|\mathcal{X}|}$ is the uncertainty set. The question then remains of how this uncertainty should be modeled in $\mathcal{G}$. Traditionally, the reward in a DCP is identically 0 everywhere except for terminating actions. However, assuming that uncertainty only exists at the last action misses the compositional nature of the task. Instead, we decompose $\delta(x)$ into a sum of perturbations occurring at each step of the generation process. More precisely, given a trajectory $s_0 \xrightarrow{a_0}\cdots \xrightarrow{a_{n-1}} s_n = x$, we have
\begin{align}
\label{eq:compositional_uncertainty}
\delta(x) = \sum_{i=0}^n \delta_i[a_i],  & \quad \text{where }\delta_i\in \mathcal{R}_{s_i}.
\end{align}
In this formulation, the uncertainty on $\Phi$ is \textit{split between all actions taken to construct the object}, instead of only existing at the final action, as illustrated in Fig. ~\ref{fig:DCP}. From a robust RL perspective, this uncertainty set structure corresponds to the common assumption of state-rectangularity~\citep{wiesemann2013robust, gadot2024solving}. The following section makes this model of uncertainty more explicit and analyzes the uncertainty sets of different operators. In particular, we show that the reward uncertainty set entailed by GFNs is inadequate, whereas the set induced by the soft mellowmax operator provides a more meaningful notion of uncertainty.




\subsection{Fenchel-robust formulation of regularized MDPs}
\label{sub:rect_robust_RL}
\looseness=-1

In this subsection, we use the robust MDP notations introduced in \S~\ref{sec:background}. The following result provides an explicit mapping between reward-robust MDPs, as defined in ~\eqref{eq:compositional_uncertainty}, and regularized MDPs, through equivalent value functions. 

\begin{theorem}[Fenchel-Robust MDP]
\label{thm: fenchel robust}
    For any state $s\in\gS$, let $\Omega_s: \Delta_{\gA}\to\bar{\R}$ be a proper convex regularization function. Denote by $\Omega^*_s$ its convex conjugate which is known to be proper and convex. For any $s \in \gS$, define the reward set 
    $$\gR_s:= r_0(s,\cdot) + \left\{r_s\in\R^{\gA}: \Omega^*_s(-r_s)\leq 0\right\}.$$
    Then, denoting the robust operator for $\gR_s$ by $\Trob{\pi}{\gR_s}$ and the regularized operator for $\Omega_s$ by $\Treg{\pi}{\Omega_s}$:
    \begin{equation*}
        \Trob{\pi}{\gR_s}v = \Treg{\pi}{\Omega_s} v,\qquad \forall \pi\in\Pi, v\in\R^{\gS}.
    \end{equation*}
\end{theorem}
Based on this theorem, solving Eq.~\ref{eq:maxmin} is equivalent to solving the associated regularized MDP.
Thm.~\ref{thm: fenchel robust} uses convex conjugacy in the context of dynamic programming. Similar results can be found in previous works \citep{eysenbach2021maximum, husain2021regularized, brekelmans2022your, derman2021twice}, but as we carefully detail in Appx.~\ref{appx: rw reg rl}, they do not directly apply to our setting.

\subsection{Robust sets induced by common regularizers}
Given this perspective, we now analyze the uncertainty sets corresponding to the regularizers we considered above. Since each is a special case of GM, we first give the uncertainty set of GM:
\begin{align}
\label{eq:robust_set}
\textstyle
        \gR_s := r_0(s,\cdot) + \left\{r_s\in\R^{\Ac}: \frac{1}{\omega}\sum_{a\in\Ac} d_s(a)^{\ttq} e^{-\omega r_s(a)} \leq 1\right\}.
\end{align}
A proof is in App.~\ref{appx: known reg}. As opposed to \citep{derman2021twice}, the fact that our uncertainty sets are state-rectangular makes them independent of the executed policy, which results in a formulation that fits with the standard robust MDP setting \citep{wiesemann2013robust}.

Fig.~\ref{fig:one_step_uncertainty} shows the uncertainty sets induced by three regularizers: negative Shannon entropy, soft mellowmax, and GM. For this illustration, we consider a single-step generation process with 2 possible objects and associated proxy rewards $[1,1]$. Similarly to \citep{brekelmans2022your}, we find that the entropy regularized Bellman operator (left) fails to capture a meaningful notion of robustness. Indeed, the uncertainty set never contains the proxy $\Phi(x)$, although higher values of $\omega$ do bring the set closer to the proxy reward. This problem is significantly exacerbated in a DCP with multiple steps: at each layer, the uncertainty set becomes further away from the proxy reward (see App.~\ref{appx:multi_uncertainty_set}).
\begin{figure}
  \centering
  \centerline{\includegraphics[width=1\textwidth]{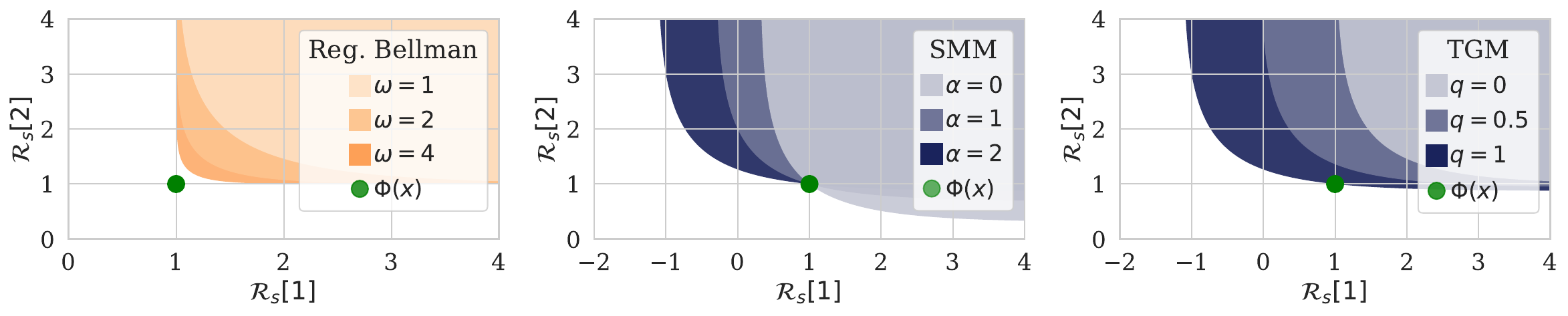}}
  \caption{\small
  (Left) Regardless of $\omega$, the uncertainty set \emph{never contains $\Phi(x)$}. As a result, the soft Bellman/GFN operator is only robust to increases in reward. (Middle) For the soft mellowmax operator, for different values of $\alpha$ with $d_s[1]>d_s[2]$, the uncertainty set contains $\Phi(x)$. Thus, the operator is robust to decreases in reward of one object (but not both at the same time). When $\alpha=0$ (mellowmax), there is a symmetry in this tradeoff, while increasing $\alpha$ skews it such that the object with higher $d_s$ only admits a small decrease in reward. (Right) The uncertainty sets for GM interpolate between the two effects. While the uncertainty set for $0 < \ttq < 1$ does not contain $\Phi(x)$, it does contain points corresponding to decreases in reward.}
  \label{fig:one_step_uncertainty}
  \vspace{-3mm}
\end{figure}

\vspace{-0.5em}
\section{Experiments}
\vspace{-0.5em}
\label{sec:experiments}
\looseness=-1
Our experiments aim to answer three main questions:
\begin{enumerate}[left=0pt,nosep,label={\bf [Q\arabic*]}]
    \item How does the parameter $\ttq$  affect the peakiness of sampling in small and synthetic environments?
    \item Does TGM find better candidates than standard methods in large biological design DCP tasks?
    \item How robust is TGM to changes in hyperparameter settings?
    \vspace{-0.5em}
\end{enumerate}
\subsection{Impact of $\ttq$ in small and synthetic environments [Q1]}
\looseness=-1
\textbf{TF-Bind-8.}
Originally proposed in \citep{trabucco2022design}, TF-Bind-8 generates DNA sequences of length 8 to find those with high binding activity to human transcription factors. The reward is the experimental binding activity of the sequence from \citep{barrera2016survey}. As there are only $4^8$ such sequences, the search space is small enough to compute the optimal value for each operator. We compare the optimal sampling densities of each algorithm in Fig.~\ref{fig:synthetic} for $\beta=4$, $\alpha=2$ and $\omega=2$. 



\textbf{Bit sequence.}
Proposed in \citep{malkin2022trajectory}, the bit sequence task is significantly larger with $2^{120}$ possible sequences. The goal is to generate bit sequences of length $n$ by adding $k$ bits at a time. $M$ modes are selected semi-randomly and the reward is given by $r(x) = 1-\min_{y\in M} d(x,y)/n$ where $d$ is the Levenshtein edit distance \citep{levenshtein1966binary}. Loosely, the reward of a sequence is the negative normalized edit distance to the nearest mode. 

As we have access to the ground-truth modes in this case, we evaluate methods according to the following metrics: (1) the number of modes found, where a mode is deemed found if a sample is generated within distance $\delta = 28$ of that mode; (2) for each mode, we track the distance to the closest sample generated during training. We report the average of this number over the $M$ modes for the best performing hyperparameters (see App.~\ref{appx:exp_synthetic} for further details). While the best GFN run matches $\ttq=0$, the increased peakiness of non-zero $\ttq$-values yields noticeable benefits. The best-performing runs of $\ttq >0$ find samples that are much closer to the modes \textit{while also discovering more of them}. 



\begin{figure}[t]
    \vspace{-3mm}
    \centering
    \begin{subfigure}[b]{0.33\textwidth}
        \centering
        \includegraphics[width=\textwidth]{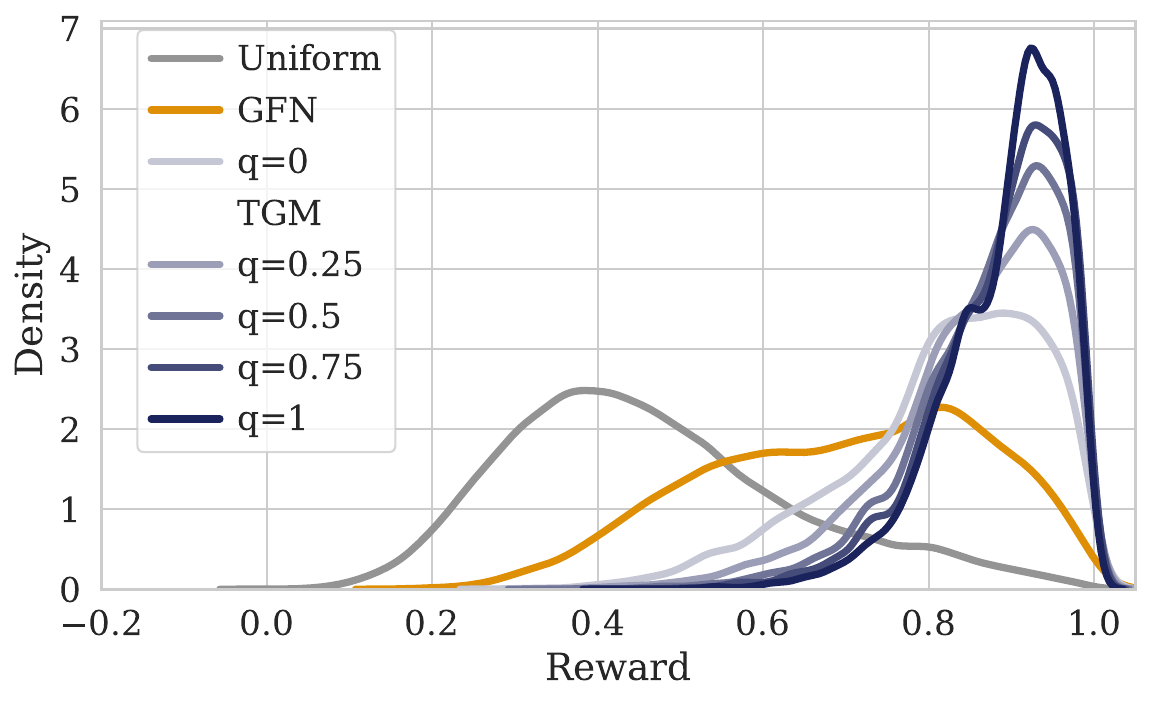}
    \end{subfigure}
    \hfill
    \begin{subfigure}[b]{0.65\textwidth}
      \centering
      \centerline{\includegraphics[width=1\textwidth]{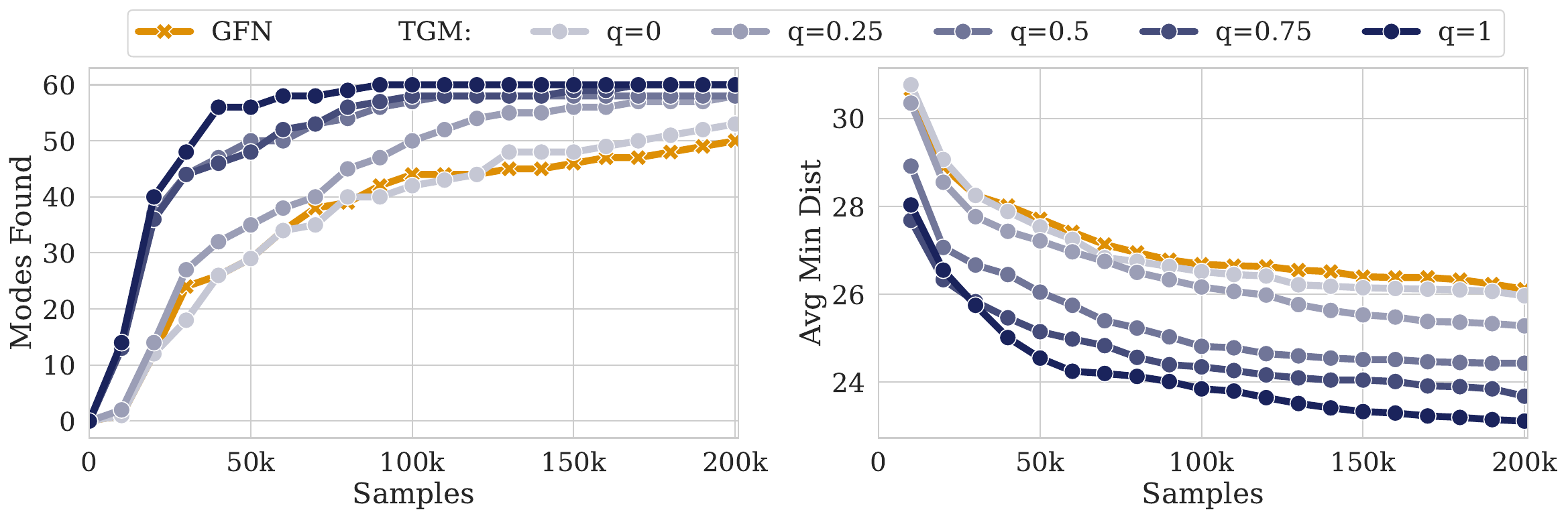}}
    \end{subfigure}
    \caption{\small 
    (Left) Comparison of the optimal sampling distribution of GFN and variants of TGM for TF-Bind-8 rewards. For the same $\beta = 4$, TGM concentrates significantly more mass on the upper quantiles of the reward distribution. (Middle) Number of modes found by TGM and GFN. The increased peakiness of the TGM sampling does not harm its ability to find different modes. (Right) Average (over modes) of the distance of the closest sample found for each mode. On average, increasing $\ttq$ allows TGM to find closer samples to the true modes.}
    \label{fig:synthetic}
    
    \vspace{-2mm}
\end{figure}

\subsection{Biological sequence design [Q2]}
\looseness=-1
We now focus on tasks based on real-world domains, where proxy rewards are trained on actual datasets. The tasks are based on the setup of \cite{malkin2022trajectory} and use the datasets from \cite{trabucco2022design}. For each, a transformer $\Phi$ is first trained on the training set (either for classification or regression). The transformer output is normalized before being used as a proxy reward (see App.~\ref{appx:exp_bio_design}).



\begin{table}[t]

\centering
\footnotesize 
\setlength{\tabcolsep}{2.5pt} 
\scalebox{.98}{
\begin{tabular}{lllllllll|l}
\toprule
& \multicolumn{3}{c}{\textbf{}} & \multicolumn{5}{c}{\textbf{TGM (ours)}} & \multicolumn{1}{l}{\textbf{Valid.}} \\
\cmidrule(lr){5-9}
& \textbf{SAC} & \textbf{PPO} & \textbf{GFN} & $\mathbf{q=0}$& $\mathbf{q=0.25}$& $\mathbf{q=0.5}$ & $\mathbf{q=0.75}$ & $\mathbf{q=1.0}$ & \textbf{Max}\\
\midrule
\textbf{UTR} & $3.82{\pm}.03$ & $3.46{\pm}.02$ & $4.12{\pm}.00$ & $4.19{\pm}.01$ & $4.15{\pm}.01$ & $4.25{\pm}.00$ & $\mathbf{4.27{\pm}.01}$ & $4.17{\pm}.01$ & \texttt{4.26} \\
\textbf{AMP} & $5.69{\pm}.35$ & $9.79{\pm}.02$ & $10.04{\pm}.04$ & $10.13{\pm}.04$ & $9.93{\pm}.03$ & $9.82{\pm}.02$ & $\mathbf{10.43{\pm}.07}$ & $10.29{\pm}.05$ & \texttt{9.96} \\
\textbf{GFP} & --- & $0.66{\pm}.02$ & $1.90{\pm}.03$ & $2.44{\pm}.09$ & $\mathbf{2.95{\pm}.08}$ & $2.66{\pm}.12$ & $2.34{\pm}.11$ & $2.05{\pm}.11$ & \texttt{3.63} \\
\bottomrule
\end{tabular}}
\caption{\small Average mode reward of TGM compared to baselines on the biological sequence design tasks. We report the average and standard error over 15 seeds. In all domains, variants of TGM either match or outperform GFN/PPO/SAC with a particularly pronounced difference in GFP, the largest domain. Valid. max refers to the object with the highest proxy reward in the validation set of the dataset used to train the proxy function.}
\label{tab:biogen_results}
\end{table}

\textbf{5' Untranslated Region Sequence (UTR).}
5' UTR is an mRNA region that regulates transcription of the main coding sequence. The goal of the task is to find a UTR sequence of length 50 that maximizes predicted gene expression level. We take the dataset of \numprint{280000} filtered sequences from \cite{trabucco2022design} and associated ribosome loads. We train a transformer as a regressor to predict the ribosome load from a sequence. The vocabulary consists of 4 nucleotides. 

\textbf{Antimicrobial Peptide (AMP).}
Antimicrobial peptides are short sequences of amino acids that have effects on microbes (bacteria, viruses, etc.). The goal is to discover novel peptides that are likely to have antimicrobial properties. The DBAASP database has collected known peptides with and without antimicrobial activity \citep{pirtskhalava2021dbaasp}. We use the dataset (sourced from \cite{trabucco2022design}) of 9222 non-AMP sequences and 6438 AMP sequences and train a binary classifier (predicting whether a sequence has antimicrobial properties or not). We use the normalized logit of the classifier as proxy reward. The vocabulary for this task consists of the 20 amino acids, plus a token corresponding to sequence termination. We mask the tokens such that the minimum sequence length is 14 and the maximum sequence length is 60. This is the only task where we allow variable-length sequences (as it is the only one whose dataset contains sequences of variable length).

\textbf{Green Fluorescent Protein (GFP).}
The green fluorescent protein is a length-237 protein whose fluorescence has numerous applications in biology. The goal of this task is to discover other proteins with high predicted fluorescence. We take the processed dataset from \cite{trabucco2022design}, which contains 56086 variations of the original sequence and their measured fluorescence. We train a regressor to predict the measured fluorescence, and the vocabulary consists of the 20 amino acids.

\textbf{Evaluation.} 
Each method is trained for \numprint{100000} generated samples during which we regularly evaluate the learned network. By varying the temperature coefficient of the policy, we can move along a quality/diversity curve for each sampler. To approximately evaluate Eq.~\ref{eq:diversity}, we generate a set of samples for a range of temperature coefficients. We then aggregate the samples and aim to approximately determine the best $k$ distinct samples (i.e., such that the minimum distance is $\delta$) found by the policy. While this problem is NP-complete \cite{karp2009reducibility} (it can be seen as an instance of finding a maximal independent set), a greedy approach appears to work well in practice. For each method, a sweep is performed over hyperparameters. The best performing setting (based on the final average mode reward) is then run with 15 different seeds. See App.~\ref{appx:exp_bio_design} for further details.

\textbf{Results.}
As illustrated in Tab.~\ref{tab:biogen_results}, for all three tasks, \textit{all variants of TGM either roughly match or exceed the performance of GFN, soft actor-critic (SAC) \citep{haarnoja2018soft}, and proximal policy optimization (PPO) \citep{schulman2017proximal}.} Similarly to what was found in \citep{malkin2022trajectory, madan2023learning}, SAC and PPO perform relatively poorly. We hypothesize that the credit assignment problem from terminal rewards is a significant issue for these methods. In AMP, where generating shorter sequences is possible, PPO is able to achieve similar results to TGM/GFN. The difference between TGM and GFN is most pronounced in the largest environment GFP, where TGM finds modes with significantly higher reward than the mean in Fig.~\ref{fig:reward_dist}. We hope this result shows the potential scalability of TGM. Interestingly, the effect of $\ttq$ is more varied in these environments, showing the benefits of interpolation in the GM operator. The best performing variants balance dilution and accumulation by using $\ttq\in (0,1)$. 



\subsection{Hyperparameter robustness [Q3]}
\looseness=-1

\begin{figure}
  \centering
  \centerline{\includegraphics[width=0.95\textwidth]{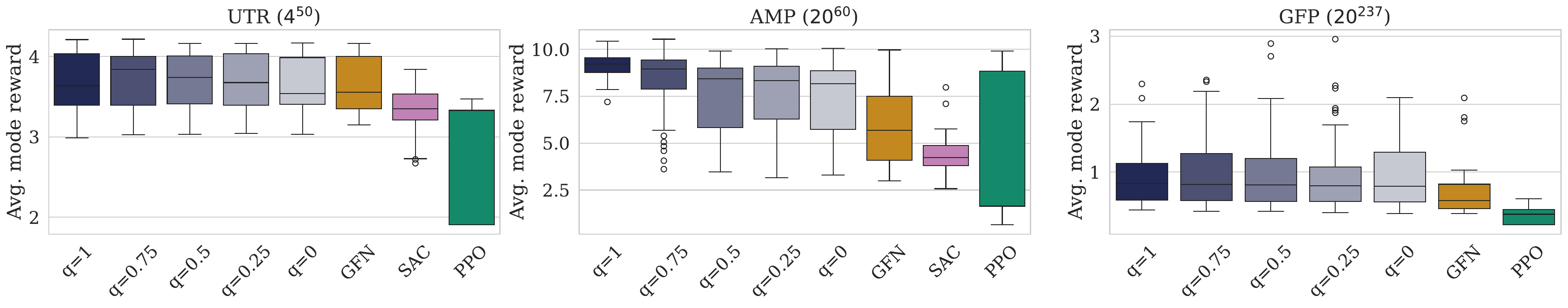}}
  \caption{\small Spread of final average mode rewards for various algorithms from a grid sweep over learning rates, $\beta$ and $\omega$. TGM on average performs better in AMP and GFP and similarly in UTR.}
  \vspace{-3mm}
  \label{fig:hyperparameter}
\end{figure}

The best runs for Tab.~\ref{tab:biogen_results} were selected from grid sweep over learning rates $\{0.00001, 0.0001, 0.001\}$, $\beta \in \{4, 16, 64, 256 \}$ and inverse temperature parameters $\omega \in \{1, 4, 16\}$. For SAC and PPO, we set the entropy coefficient to $1/\omega$. For GFN, we set the sampling temperature of the policy to $1/\omega$. To ensure an equal number of runs per method, we fix $\alpha=1$ and only vary $\omega$ for TGM. Varying $\alpha$ as well would likely be beneficial. We plot the performance distribution of this sweep in Fig.~\ref{fig:hyperparameter}.

TGM variants seem relatively robust to these hyperparameter variations. The average SAC run is relatively stable but performs poorly. PPO is very sensitive to hyperparameter settings, with the best runs achieving strong performance in AMP. \textit{The mean average mode reward of TGM is higher than GFN across $\ttq$ and environments}. Remarkably, TGM with $\ttq =1$ in AMP has little variability, with almost all runs performing well despite the significantly different hyperparameters. For GFP, there is significantly more variability, with the best performances coming from outliers for each method.

\vspace{-0.5em}
\section{Conclusion}
\vspace{-0.5em}
\label{sec:conclusion}
\looseness=-1
In this paper, motivated by the inadequacy of sampling proportional to reward, we generalize various soft RL operators and propose GM as an interpolated regularizer. From this regularizer, we develop a novel algorithm (TGM) and show it is consistently able to outperform GFNs on a variety of DCP tasks. Finally, we adapt the reward-robust RL framework to DCP tasks where rewards are given for entire trajectories to offer a new perspective on regularization in these problems. Ultimately, we believe TGM has the potential to be a more general and effective framework for finding promising candidates in scientific discovery applications. 




\clearpage
\section{Acknowledgments} 
The authors would like to thank Glen Berseth, Muqeeth Mohammed and Sobhan Mohammadpour for fruitful discussions and feedback as well as Moksh Jain for providing helpful advice for setting up the biological sequence/bit sequence experiments. This research was enabled in part by compute resources provided by Mila. G. Gidel is a CIFAR AI Chair, he is supported by a Discovery Grant from the Natural Science and Engineering Research Council (NSERC) of Canada. M. Jiralespong is supported by an IVADO fund under the Canada First Research Excellence Fund grant to develop robust, reasoning, and responsible artificial intelligence. E. Derman is supported by Samsung. M. Jiralerspong acknowledges the support of the Natural Sciences and Engineering Research Council of Canada (NSERC) [funding reference no. 600916].




\bibliography{iclr2026_conference}
\bibliographystyle{iclr2026_conference}

\clearpage
\appendix

\section*{Appendix}
\section{Robustness-Regularization Duality}

\subsection{Convex Analysis}
\label{sec:convex}
Before proving our results, we briefly recall notions of convex analysis below \cite{bertsekas2009convex}.

\begin{definition}[Convex conjugate]
    Let a function $f: \R^{n}\to\bar{\R}$ with domain $\mathrm{dom}(f)\subseteq \R^{n}$. The convex conjugate of $f$ is defined as 
    \begin{align*}
        f^*(y) := \sup_{x\in\mathrm{dom}(f)}\innorm{x,y} - f(x). 
    \end{align*}
\end{definition}

\begin{definition}[Infimal convolution]
\label{def: infimal}
    Given two functions $f,g:\R^n\to\bar{\R}$, the infimal convolution of $f$ and $g$ is defined as:
    \begin{align*}
        [f \Box g] (x):= \inf_{z\in\R^{n}}\{f(x-z) + g(z)\}.   
    \end{align*}
\end{definition}

\begin{property}[Operations on conjugate transforms]
\label{prop: operation conj}
    Let two functions $f_1, f_2: \R^{n}\to\bar{\R}$ and a positive real number $\omega > 0$. 
    \begin{itemize}
        \item[(i)]  Defining $g(x) := \omega f_1(x)$, the convex conjugate of $g$ satisfies $g^*(y) = \omega f_1^*(\frac{y}{\omega})$. 
        \item[(ii)]  For $\kappa\neq 0$ and $h(x) :=  f_1(\kappa x)$, the convex conjugate of $h$ satisfies $h^*(y) =  f_1^*(\frac{y}{\kappa})$. 
        \item[(iii)]  The convex conjugate of the sum $f_1+f_2$ is the infimal convolution of their conjugates $f_1^* \Box f_2^*$, namely:
    \begin{align*}
        [f_1+f_2]^*(y) = [f_1^* \Box f_2^*](y), \qquad\forall y\in\R^n.
    \end{align*}
    \end{itemize}
\end{property}

\subsection{Application to policy divergence}
For Shannon entropy and KL divergence, the convex conjugate can be derived in closed form and is known to be a logsumexp function, so we omit the proof \cite{geist2019theory, derman2021twice}.

\begin{proposition}[Shannon conjugate]
\label{prop: shannon conj}
Define the negative Shannon entropy $-\hsc: \R^{n}\to \bar{\R}$ as $[-\hsc](x):= \sum_{i=1}^n x_i\log(x_i)$ over the simplex domain $\Delta_n$. 
It is a convex function, and its convex conjugate is the logsumexp: 
    $$\lse_n(y):= \log\left(\sum_{i=1}^n e^{y_i}\right), \qquad \forall y\in\R^n. $$ 
\end{proposition}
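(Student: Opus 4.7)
The plan is to establish the two claims of the proposition separately: first convexity of $-\hsc$ on $\Delta_n$, then identification of the conjugate with $\lse_n$. For convexity, I would observe that the scalar function $t \mapsto t\log t$ on $[0,1]$ (with the convention $0\log 0 = 0$) has second derivative $1/t > 0$, so it is strictly convex. Since $-\hsc(x) = \sum_{i=1}^n x_i \log x_i$ is a finite sum of such scalar convex functions applied coordinatewise, it is convex on $\Delta_n$ (indeed strictly convex on its relative interior).

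For the conjugate, the plan is to solve the defining variational problem
\begin{equation*}
    [-\hsc]^*(y) = \sup_{x \in \Delta_n} \Big\{ \innorm{x,y} - \sum_{i=1}^n x_i \log x_i \Big\}
\end{equation*}
directly via Lagrange multipliers on the equality constraint $\sum_i x_i = 1$. The non-negativity constraints $x_i \ge 0$ will be inactive at the optimum because $\partial_{x_i}(x_i\log x_i) = 1 + \log x_i \to -\infty$ as $x_i \to 0^+$, so the gradient of the objective blows up to $+\infty$ there and drives the optimizer into the relative interior. Introducing a multiplier $\lambda$ for the sum-to-one constraint and setting the partial derivatives of the Lagrangian to zero yields $y_i - 1 - \log x_i - \lambda = 0$, hence $x_i^\star \propto e^{y_i}$, and enforcing $\sum_i x_i^\star = 1$ gives the softmax $x_i^\star = e^{y_i} / \sum_j e^{y_j}$.

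To finish, I would plug this optimizer back into the objective. A short computation gives
\begin{equation*}
    \sum_i x_i^\star y_i - \sum_i x_i^\star \log x_i^\star = \sum_i x_i^\star \Big( y_i - \log \frac{e^{y_i}}{\sum_j e^{y_j}} \Big) = \log \sum_j e^{y_j},
\end{equation*}
since the $y_i$ terms cancel and the remaining $\log \sum_j e^{y_j}$ factors out of $\sum_i x_i^\star = 1$. This is exactly $\lse_n(y)$, proving the claim for every $y \in \R^n$.

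The only mildly delicate point is the boundary behavior of the simplex: one must justify why the supremum is attained in the relative interior even though the domain is closed. This is exactly where the log-barrier effect of $x_i \log x_i$ near $0$ is used, together with continuity of the objective on the compact set $\Delta_n$ (with the convention $0 \log 0 = 0$) to guarantee existence of a maximizer. Once this is in place, the Lagrangian calculation is routine and no further obstacle remains.
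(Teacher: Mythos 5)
Your proof is correct. The paper in fact omits the proof of this proposition entirely, remarking only that the conjugate "is known to be a logsumexp function" and citing standard references; your self-contained Lagrange-multiplier derivation --- convexity of $t\mapsto t\log t$ from its second derivative, the observation that the infinite slope of $x_i\log x_i$ at $x_i=0^+$ forces the maximizer into the relative interior of $\Delta_n$ (so the nonnegativity constraints are inactive and existence follows from compactness), the softmax stationary point, and the back-substitution yielding $\log\sum_j e^{y_j}$ --- is exactly the standard argument and establishes the claim completely.
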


\begin{proposition}[KL conjugate]
\label{prop: kl conj}
Let $d\in\R^n$ be such that $d > 0$ and define the KL-divergence function $\kl_d(x) := \sum_{i=1}^n x_i\log\left(\frac{x_i}{d_i}\right), \quad\forall x\in\Delta_n$. It is a convex function, and its convex conjugate is the weighted logsumexp: 
    \begin{align*}
        \wlse_{n, d}(y) := \log\left(\sum_{i=1}^n d_ie^{y_i}\right), \qquad \forall y\in\R^n. 
    \end{align*}
\end{proposition}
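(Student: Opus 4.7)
The plan is to establish convexity first, then compute the conjugate by leveraging the Shannon-entropy conjugate already proved in Proposition~\ref{prop: shannon conj}, rather than redoing a Lagrangian optimization from scratch.

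For convexity, I would write, for $x \in \Delta_n$ (with the convention $0 \log 0 = 0$),
$$\kl_d(x) \;=\; \sum_{i=1}^n x_i \log x_i \;-\; \sum_{i=1}^n x_i \log d_i,$$
and note that each term $t \mapsto t\log t$ is strictly convex on $(0,\infty)$ since its second derivative equals $1/t > 0$, while the piece $-x_i \log d_i$ is linear and thus does not affect convexity. A separable sum of convex functions being convex, $\kl_d$ is convex on its domain.

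For the conjugate, the key observation is the affine decomposition
$$\kl_d(x) \;=\; [-\hsc](x) + \langle -\log d,\, x\rangle, \qquad x \in \Delta_n,$$
where $\log d$ denotes the coordinate-wise logarithm (well-defined since $d > 0$). Using the elementary shift rule $[f + \langle a, \cdot\rangle]^*(y) = f^*(y - a)$, which follows directly from the definition of the conjugate (or, if one prefers, from Property~\ref{prop: operation conj}(iii) with the linear map whose conjugate is the Dirac-type indicator at $a$), I obtain
$$\kl_d^*(y) \;=\; [-\hsc]^*\bigl(y - (-\log d)\bigr) \;=\; \lse_n(y + \log d).$$
Expanding with Proposition~\ref{prop: shannon conj} gives
$$\lse_n(y+\log d) \;=\; \log \sum_{i=1}^n e^{y_i + \log d_i} \;=\; \log \sum_{i=1}^n d_i e^{y_i} \;=\; \wlse_{n,d}(y),$$
which is exactly the claim.

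There is no substantive obstacle here — this is a routine convex-analysis computation — but two minor points deserve care. First, the supremum defining the conjugate must be taken over $\Delta_n$ (i.e., $\kl_d \equiv +\infty$ outside the simplex), matching the domain convention used in Proposition~\ref{prop: shannon conj} so that the shift rule transfers cleanly. Second, as an independent sanity check I would also sketch the direct Lagrangian derivation: imposing $\sum_i x_i = 1$ with multiplier $\lambda$ and differentiating $\langle x, y\rangle - \kl_d(x)$ yields the stationarity condition $y_i - \log(x_i/d_i) - 1 - \lambda = 0$, hence $x_i^\star = d_i e^{y_i}/\sum_j d_j e^{y_j}$; substituting back recovers $\log \sum_j d_j e^{y_j}$, confirming $\wlse_{n,d}$ and simultaneously exhibiting the maximizer of the Fenchel dual problem.
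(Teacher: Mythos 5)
Your proof is correct. The paper itself omits the proof of this proposition, citing it as a known fact from \citep{geist2019theory, derman2021twice}, so there is no in-paper argument to compare against; but your route is a clean and complete way to supply the missing details. The affine decomposition $\kl_d = [-\hsc] + \langle -\log d,\cdot\rangle$ together with the shift rule $[f+\langle a,\cdot\rangle]^*(y)=f^*(y-a)$ correctly reduces everything to Prop.~\ref{prop: shannon conj}, and your domain remark (extending $\kl_d$ by $+\infty$ off $\Delta_n$ so the shift rule transfers the simplex constraint) is exactly the point that needs care. The Lagrangian cross-check also goes through: the stationary point $x_i^\star = d_i e^{y_i}/\sum_j d_j e^{y_j}$ substituted back into $\langle x,y\rangle - \kl_d(x)$ gives $\log\sum_j d_j e^{y_j}$, matching $\wlse_{n,d}$. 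One very minor quibble: invoking Property~\ref{prop: operation conj}(iii) for the shift rule is overkill (and requires identifying the conjugate of a linear functional with an indicator of a singleton, which the paper never sets up); the one-line direct verification from the definition of the conjugate, which you also give, is the cleaner justification.
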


We are now interested in deriving the Fenchel conjugate of a convex combination of Shannon entropy and KL divergence. Its explicit form is described below. 

\begin{proposition}[Shannon-KL conjugate]
\label{prop:shannon kl}
    For any $\ttq\in[0,1]$, the convex combination $(1-\ttq)(-\hsc) + \ttq \kl_d$ satisfies: 
    \begin{align*}
        [(1-\ttq)(-\hsc) + \ttq \kl_d](x) = \sum_{i=1}^n x_i\log\left(\frac{x_i}{(d_i)^{\ttq}}\right),\qquad \forall x\in\R^n. 
    \end{align*}
    and admits as convex conjugate the function: 
    \begin{align*}
        [(1-\ttq)(-\hsc) + \ttq \kl_d]^*(y) = \log\left(\sum_{i=1}^n (d_i)^{\ttq}e^{y_i}\right), \qquad \forall y\in\R^n.
    \end{align*}
\end{proposition}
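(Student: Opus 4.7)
The first identity is a short algebraic simplification. Expanding both terms and regrouping the $\log x_i$ contributions gives
\[
(1-\ttq)\sum_{i=1}^n x_i \log x_i + \ttq \sum_{i=1}^n x_i \log\frac{x_i}{d_i} = \sum_{i=1}^n x_i \log x_i - \ttq \sum_{i=1}^n x_i \log d_i = \sum_{i=1}^n x_i \log\frac{x_i}{d_i^{\ttq}},
\]
for every $x\in\Delta_n$, which I would check componentwise. The boundary values $\ttq\in\{0,1\}$ recover $-\hsc$ and $\kl_d$ respectively, as a sanity check, and the function remains $+\infty$ off the simplex under the convention of the two preceding propositions.

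For the conjugate, the cleanest route is to reuse the KL-conjugate proposition rather than compute an infimal convolution of two logsumexp-type functions via Property~(iii) on conjugate transforms. The key observation is that the identity above expresses the regularizer as a generalized KL-type divergence $\kl_w(x) := \sum_i x_i\log(x_i/w_i)$ against the positive weight vector $w\in\R^n_{>0}$ with $w_i := d_i^{\ttq}$. Positivity follows from $d>0$ and $\ttq\in[0,1]$, since $d_i^{\ttq}>0$ in that range. The preceding KL-conjugate result is stated for an arbitrary positive reference vector (not only for probability vectors), so it applies verbatim to $w$ and yields
\[
\bigl[(1-\ttq)(-\hsc) + \ttq\,\kl_d\bigr]^*(y) = \kl_w^*(y) = \log\!\left(\sum_{i=1}^n d_i^{\ttq}\, e^{y_i}\right).
\]

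The only delicate point is confirming that the KL-conjugate proposition indeed permits unnormalized positive weights; reading its hypothesis, only $d>0$ is required, so this is satisfied. As a self-contained alternative, I would carry out a direct Lagrangian argument on the simplex: introduce a multiplier $\lambda$ for $\sum_i x_i=1$, set the $x_i$-partial derivative of $\innorm{x,y} - \sum_i x_i\log(x_i/d_i^{\ttq}) - \lambda(\sum_i x_i - 1)$ to zero to obtain $x_i^{\star} \propto d_i^{\ttq} e^{y_i}$, normalize to get the tilted softmax, and substitute back to cancel the $y_i$ term against the log-ratio. Strict concavity of $-\hsc$ (preserved by the convex combination for $\ttq\in[0,1]$) guarantees this critical point is the unique maximizer and the supremum is attained, producing the same closed form in a few lines.
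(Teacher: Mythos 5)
Your proof is correct and follows essentially the same route as the paper: an elementary regrouping of the $\log x_i$ terms for the first identity, followed by recognizing the combination as a KL-type divergence against the positive (unnormalized) weight vector $d^{\ttq}$ and invoking the weighted-logsumexp conjugate, which indeed only requires $d>0$. Your explicit check that the KL-conjugate proposition tolerates unnormalized weights is a worthwhile clarification the paper leaves implicit.
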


\begin{proof}
    The first statement comes from elementary algebra:
    \begin{align*}
        [(1-\ttq)(-\hsc) + \ttq \kl_d](x) &= (1-\ttq)\sum_{i=1}^n x_i\log(x_i) + \ttq\sum_{i=1}^n x_i\log\left(\frac{x_i}{d_i}\right)\\
        &= (1-\ttq+ \ttq)\sum_{i=1}^n x_i\log(x_i) -\ttq  \sum_{i=1}^n x_i\log(d_i)\\
        &= \sum_{i=1}^n x_i\log(x_i) - \sum_{i=1}^n x_i\log((d_i)^\ttq) \\
        &= \sum_{i=1}^n x_i\log\left(\frac{x_i}{(d_i)^{\ttq}}\right) = \kl_{d^\ttq}(x).
    \end{align*}
    For the second statement, we simply apply 
    Prop.~\ref{prop: kl conj} to obtain that   
    \begin{align*}
        [(1-\ttq)(-\hsc) + \ttq \kl_d]^*(y) 
        &=  \wlse_{n, d^\ttq}(y) = \log\left(\sum_{i=1}^n (d_i)^{\ttq}e^{y_i}\right).
    \end{align*}
    A more involved proof would combine notions of infimal convolution with the respective conjugates of the two divergences. We omit it for brevity. 
\end{proof}

\subsection{Proof of Thm.~\ref{thm: fenchel robust}}
We provide a slightly general proof below, with an arbitrary $\epsilon_s$-level set instead of just $0$. 
\begin{theorem}[Fenchel-Robust Regularized MDP]
    For any $s\in\gS$, define the reward set 
    $$\gR_s:= r_0(s,\cdot) + \{r_s\in\R^{\gA}: f_s^*(-r_s)\leq \epsilon_s\},$$
    and the regularization function $\Omega_s(\pi_s):= f_s(\pi_s) + \epsilon_s, \quad\forall s\in\gS$. 
    Then, for any policy $\pi\in\Pi$, $\Trob{\pi}{\gR}v = \Treg{\pi}{\Omega} v, \quad\forall v\in\R^{\gS},$
    and $v^{\pi}_{\gR} = v^{\pi, \Omega}$.  
\end{theorem}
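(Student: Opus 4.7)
My strategy is to reduce the operator identity to a single pointwise Fenchel computation via state-rectangularity, prove that identity, then pass to value functions by a standard contraction argument. Because $\gR = \times_{s}\gR_s$ and the regularized operator also acts state-by-state, it suffices to show $[\Trob{\pi}{\gR}v](s) = [\Treg{\pi}{\Omega}v](s)$ for every $s$ and $v$. Decomposing $r_s = r_0(s,\cdot) + \tilde r_s$ with $\tilde r_s\in \{w: f_s^*(-w)\le \epsilon_s\}$ and substituting $u := -\tilde r_s$, the robust side becomes
\begin{equation*}
    [\Trob{\pi}{\gR}v](s) \;=\; [T^{\pi}_{r_0}v](s) \;-\; \sup_{u:\, f_s^*(u)\le \epsilon_s}\innorm{\pi_s, u}.
\end{equation*}
The theorem therefore reduces to the pointwise Fenchel identity
\begin{equation*}
    \sup_{u:\, f_s^*(u)\le \epsilon_s}\innorm{\pi_s, u} \;=\; f_s(\pi_s) + \epsilon_s \;=\; \Omega_s(\pi_s).
\end{equation*}

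\textbf{Proving the Fenchel identity.} The $\le$ direction follows immediately from Fenchel--Young: any feasible $u$ satisfies $\innorm{\pi_s, u} \le f_s(\pi_s) + f_s^*(u) \le f_s(\pi_s) + \epsilon_s$. For the $\ge$ direction I would exploit the domain restriction $\mathrm{dom}(f_s)\subseteq \Delta_\gA$. Fix $u_0 \in \partial f_s(\pi_s)$, which exists at $\pi_s\in\mathrm{ri}(\mathrm{dom}(f_s))$; boundary points reduce to this case by approximation using lower semicontinuity of $f_s$. Because every $x\in\mathrm{dom}(f_s)$ satisfies $\innorm{\1, x - \pi_s} = 0$, the shifted vector $u_0 + c\1$ remains a subgradient at $\pi_s$ for every $c\in\R$, while a direct computation on the simplex gives $f_s^*(u_0 + c\1) = f_s^*(u_0) + c$. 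Choosing $c := \epsilon_s - f_s^*(u_0)$ yields a feasible $u^* := u_0 + c\1$ with $f_s^*(u^*) = \epsilon_s$, and the Fenchel--Young equality at $u^*\in\partial f_s(\pi_s)$ then delivers $\innorm{\pi_s, u^*} = f_s(\pi_s) + f_s^*(u^*) = f_s(\pi_s) + \epsilon_s$.

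\textbf{From operators to values; main obstacle.} Combining the two steps gives $\Trob{\pi}{\gR}v = \Treg{\pi}{\Omega}v$ for all $v \in \R^\gS$. Both are $\gamma$-contractions in sup-norm (robust side by \cite{iyengar2005robust}, regularized side by \cite{geist2019theory}), so Banach's theorem gives them a unique common fixed point, and hence $v^\pi_\gR = v^{\pi,\Omega}$. The heart of the argument, and the main obstacle, is the Fenchel identity: the analogous equality for a generic convex $f$ on $\R^n$ is false (e.g.\ $f(x) = (x-1)^2$ on $\R$ already breaks it). The argument hinges specifically on the fact that $\Delta_\gA$ has empty interior in $\R^\gA$, which makes both $\partial f_s$ and $f_s^*$ shift-equivariant along $\1$; this extra degree of freedom is precisely what lets one slide along level sets of $f_s^*$ to land exactly on $\{f_s^* = \epsilon_s\}$ while preserving Fenchel--Young equality. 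Handling relative-boundary points of $\mathrm{dom}(f_s)$ where $\partial f_s(\pi_s)$ may be empty is the one remaining technical wrinkle, best resolved by an interior approximation plus a closure argument.
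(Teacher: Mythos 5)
Your proposal is correct and follows the same overall architecture as the paper's proof: decompose $r_s = r_0(s,\cdot)+\tilde r_s$, reduce the operator identity to the support function of the conjugate's level set, and pass to value functions via the common fixed point of two $\gamma$-contractions. The substantive difference is in the middle step. The paper simply asserts, ``by definition,'' that $\max\{\innorm{\pi_s,u}: f_s^*(u)\le\epsilon_s\} = f_s(\pi_s)+\epsilon_s$, whereas you actually prove it, and you correctly observe that this identity is \emph{false} for a generic proper convex $f$ on $\R^{\gA}$ (your one-dimensional counterexample $f(x)=(x-1)^2$ is valid: the level set of $f^*$ is bounded, so its support function cannot grow like $f(\pi)+\epsilon$). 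Your mechanism --- that $\mathrm{dom}(f_s)\subseteq\Delta_{\gA}$ makes both $\partial f_s(\pi_s)$ and $f_s^*$ shift-equivariant along $\1$ (since $\innorm{\1,x}=1$ on the domain), so one can slide a subgradient $u_0$ to $u_0+c\1$ landing exactly on the level set $\{f_s^*=\epsilon_s\}$ while preserving Fenchel--Young equality --- is exactly the structural reason the identity holds here, and it is the content the paper (and its cited antecedent) leave implicit. The remaining wrinkle you flag, relative-boundary points of $\mathrm{dom}(f_s)$ where $\partial f_s(\pi_s)=\emptyset$ (e.g.\ negative entropy at a vertex of the simplex), is real and your proposed interior-approximation fix is the standard resolution; the paper does not address it either. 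In short, your route buys a rigorous justification of the one step on which the whole equivalence rests, at the cost of a short excursion into subdifferential calculus that the paper avoids by assertion.
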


\begin{proof}
    First, we establish the support function of a set $\tilde{\gR}_s:= \{r'_s\in\R^{\gA}: f_s^*(r'_s)\leq \epsilon_s\}$ at any state $s\in\gS$.
    By definition, for any policy $\pi_s\in\Delta_{\gA}$, we have
    \begin{align*}
        \max_{r'_s\in\tilde{\gR}_s} \innorm{\pi_s, r'_s}
        &= \max_{\{r'_s\in \R^{\gA}: f_s^*(r'_s)\leq \epsilon_s\}}\innorm{\pi_s, r'_s}\\
        &=  f_s(\pi_s) + \epsilon_s. 
    \end{align*}
    Next, we compute the robust Bellman operator associated with the uncertainty set $\gR = \times_s\gR_s$:
    \begin{align*}
        \Trob{\pi}{\gR}v(s) &= \min_{r_s\in \gR_s} r^{\pi}(s) + \gamma P^{\pi}v(s)\\
        &= \min_{r_s\in r_0(s,\cdot) + \tilde{\gR}_s} r^{\pi}(s) + \gamma P^{\pi}v(s)\\
        &= r_0^{\pi}(s) + \min_{r'_s\in\tilde{\gR}_s} \innorm{\pi_s, r'_s} + \gamma P^{\pi}v(s)\\
        &= r_0^{\pi}(s) - \max_{r'_s\in \tilde{\gR}_s}\innorm{\pi_s, - r'_s} + \gamma P^{\pi}v(s)\\
        &= T_{r_0}^{\pi}v(s) - \max_{r'_s\in \tilde{\gR}_s}\innorm{\pi_s, - r'_s}.
    \end{align*}
    It remains to compute
    \begin{align*}
       \max_{r'_s\in \tilde{\gR}_s}\innorm{\pi_s, - r'_s} &= \max_{r'_s} \innorm{\pi_s, - r'_s} \text{ s. t. } f_s^*(-r'_s)\leq \epsilon_s\\
       &= \max_{\bar{r}_s}\innorm{\pi_s,  \bar{r}_s} \text{ s. t. } f_s^*(\bar{r}_s)\leq \epsilon_s
    \end{align*}
    Employing the above expression of the support function enables us to write:
    \begin{align*}
        \max_{\bar{r}_s}\innorm{\pi_s,  \bar{r}_s} \text{ s. t. } f_s^*(\bar{r}_s)\leq \epsilon_s 
        &= f_s(\pi_s) + \epsilon_s
    \end{align*}
    so that $\Trob{\pi}{\gR}v = \Treg{\pi}{\Omega} v, \quad\forall v\in\R^{\gS}$. The unique fixed point of each operator is $v^{\pi}_{\gR}$ and $v^{\pi, \Omega}$, respectively, which leads to the conclusion. 
\end{proof}

Equation~\ref{eq:robust_set} comes from combining Thm.~\ref{thm: fenchel robust} with Prop.~\ref{prop:shannon kl}. We can similarly obtain the uncertainty set resulting from KL and negative Shannon regularizations, based on their corresponding dual described in Sec.~\ref{sec:convex}. This leads us to the following table, summarizing the different regularizers used in this paper along with their corresponding uncertainty sets.


\subsection{Proof of Prop.~\ref{prop: tilted}}
\label{appx: known reg}
\begin{proposition}
 For all $\ttq\in [0,1]$,
$\Omega^{\mathtt{q}}_{s}(\pi_s) = \frac{1}{\omega_s}\kl(\pi_s, d_s^{(\ttq)}) -\frac{1}{\omega_s}\log(Z_{\ttq}(d_s)),$
 where $Z_{\ttq}(d_s) := \sum_{a\in\gA} d_s(a)^{\ttq}$ and $d_s^{(\ttq)} = d_s^{\ttq}/Z_{\ttq}(d_s)$ is the $\ttq$-tilted softmax distribution.
\end{proposition}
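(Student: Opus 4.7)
The plan is to obtain the identity directly by algebraic rearrangement of $\Omega^{\ttq}_s$, leveraging the closed form of the convex combination $\ttq \kl_d + (1-\ttq)(-\hsc)$ that was already established in Prop.~\ref{prop:shannon kl}. No duality or optimization is required here, so I would not invoke Thm.~\ref{thm: fenchel robust}; the statement is purely a manipulation of logarithms.

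First I would apply Prop.~\ref{prop:shannon kl} (with $d = d_s$) to rewrite the bracket in the definition~\eqref{eq:interpolated_regularizer} as a single KL-type expression:
\begin{equation*}
    \Omega^{\ttq}_s(\pi_s) \;=\; \frac{1}{\omega_s}\sum_{a \in \gA} \pi_s(a)\log\!\left(\frac{\pi_s(a)}{d_s(a)^{\ttq}}\right).
\end{equation*}
This is the key reduction: it lets me work with a single sum containing $d_s(a)^{\ttq}$ in the denominator rather than two separate entropy terms.

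Next I would introduce the normalizer $Z_{\ttq}(d_s)$ by writing $d_s(a)^{\ttq} = Z_{\ttq}(d_s)\, d_s^{(\ttq)}(a)$, which is just the definition of the tilted distribution $d_s^{(\ttq)}$. Splitting the logarithm gives
\begin{equation*}
    \sum_{a}\pi_s(a)\log\!\left(\frac{\pi_s(a)}{d_s(a)^{\ttq}}\right) \;=\; \sum_{a}\pi_s(a)\log\!\left(\frac{\pi_s(a)}{d_s^{(\ttq)}(a)}\right) \;-\; \log Z_{\ttq}(d_s)\sum_{a}\pi_s(a).
\end{equation*}
Since $\pi_s \in \Delta_{\gA}$, the last sum is $1$, and the first sum is exactly $\kl(\pi_s, d_s^{(\ttq)})$. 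Multiplying by $1/\omega_s$ delivers the claim.

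There is no genuine obstacle; the only care needed is to verify that $d_s^{(\ttq)}$ is well-defined (i.e., $Z_{\ttq}(d_s) > 0$), which holds whenever $d_s$ has full support on $\gA$, as implicitly assumed in Prop.~\ref{prop: kl conj}. Edge cases $\ttq = 0$ and $\ttq = 1$ are automatically recovered: at $\ttq = 0$ we have $Z_0(d_s) = |\gA|$ and $d_s^{(0)}$ is uniform, so the identity reduces to the well-known relation $-\hsc(\pi_s) = \kl(\pi_s, \mathrm{Unif}) - \log|\gA|$, while $\ttq = 1$ trivially recovers $\kl(\pi_s, d_s)$ with $Z_1(d_s) = 1$.
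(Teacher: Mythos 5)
Your proof is correct and follows essentially the same route as the paper's: both combine the two divergence terms into the single expression $\sum_{a}\pi_s(a)\log\bigl(\pi_s(a)/d_s(a)^{\ttq}\bigr)$ (you by citing Prop.~\ref{prop:shannon kl}, the paper by redoing the short algebra inline) and then factor out $Z_{\ttq}(d_s)$ using $\sum_a \pi_s(a)=1$. Your remarks on well-definedness of $d_s^{(\ttq)}$ and the edge cases $\ttq\in\{0,1\}$ are a harmless addition.
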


\begin{proof}
    By definition, 
    \begin{align*}
        \omega_s\Omega^{\ttq}_{s}(\pi_s) &= \left[ \ttq \kl(\pi_s,d_s)  + (1-\ttq) (-\hsc(\pi_s))\right]\\
        &= \ttq\sum_{a\in\Ac} \pi_s(a)\log\left(\frac{\pi_s(a)}{d_s(a)}\right) + 
        (1-\ttq) \sum_{a\in\Ac}\pi_s(a)\log(\pi_s(a))\\
        &= - \ttq\sum_{a\in\Ac} \pi_s(a)\log(d_s(a)) + 
         \sum_{a\in\Ac}\pi_s(a)\log(\pi_s(a)) \\
        &= \sum_{a\in\Ac} \pi_s(a)\log\left(\frac{\pi_s(a)}{d_s(a)^{\ttq}}\right)\\
        &= \sum_{a\in\Ac} \pi_s(a)\log\left(\frac{\pi_s(a)}{d_s(a)^{\ttq}/Z_{\ttq}(d_s)}\right) - 
        \sum_{a\in\Ac} \pi_s(a)\log(Z_{\ttq}(d_s))\\
        &= \sum_{a\in\Ac} \pi_s(a)\log\left(\frac{\pi_s(a)}{d_s(a)^{\ttq}/Z_{\ttq}(d_s)}\right) - \log(Z_{\ttq}(d_s)),
    \end{align*}
which yields the desired result. 
\end{proof}

\begin{table}[h]
    \centering
    \caption{Summary table of policy regularizers. }
    \def\arraystretch{1.5} \small
\begin{tabular}{|p{1.4cm}|p{2.6cm}|p{2.6cm}|p{2.6cm}|p{2.5cm}|}
\hline
& \textbf{Neg.  Shannon} & \textbf{KL divergence} & \textbf{GSM} & \textbf{General convex}\\\hline
\textbf{Regularizer $\Omega_s$}& $$-\hsc(\pi_s)$$  & $$\kl(\pi_s,d_s)$$  & 
\begin{multline*}
        \frac{1}{\omega_s} ( \ttq \kl(\pi_s,d_s) \\ + (1-\ttq) (-\hsc(\pi_s)))
\end{multline*} & 
$$f_s(\pi_s) + \epsilon_s$$\\\hline
 \textbf{Conjugate} $\Omega_s^*$ & $$\lse_{\gA}(q_s)$$ &  $$\wlse_{\gA, d_s}(q_s)$$  & $$\omega_s^{-1}\wlse_{\gA, d_s^{\ttq}}(\omega_s q_s)$$& $$f_s^*(q_s)-\epsilon_s$$\\ \hline
  \textbf{Reward Uncertainty}
     &  $$\{r_s: \Omega^*_s(-r_s)\leq 0\}$$
     & $$\{r_s: \Omega^*_s(-r_s)\leq 0\}$$ 
     & $$\{r_s: \Omega^*_s(-r_s)\leq 0\}$$
     & $$\{r_s: f_s^*(-r_s) \leq  \epsilon_s\}$$\\ \hline
\end{tabular}
    \label{tab: regularizers}
\end{table}

\subsection{Comparison with other regularized RL works}
\label{appx: rw reg rl}

\cite{eysenbach2021maximum} focuses on Shannon entropy, while we encompass a broad class of regularizers. \cite{husain2021regularized, brekelmans2022your} analyze robustness from the LP-dual perspective of RL. Although equivalent at optimum, this approach may not hold for any given policy. The notion of occupancy measure is also obscure in the context of DCP where transitions are fully determined by the agent's decisions and the time horizon is finite. Differently, Thm.~\ref{thm: fenchel robust} establishes a robustness-regularization equivalence for any policy via Bellman evaluation operators. It thus provides a principled identity between robust dynamic programming in the sense of \citep{iyengar2005robust} and regularized MDPs in the sense of \citep{geist2019theory}. Finally, the research motivation of \cite{derman2021twice} being different, they proceed the opposite way from ours: they deduce a regularizer from generic uncertainty sets, whereas we deduce uncertainty sets from generic regularizers. This enables us to clarify the robustness properties caused by regularization.

\clearpage

\section{Additional results}
\subsection{Impact of $\beta$ on GFN performance}
Another potential solution to the issue of sampling proportional to $e^{\Phi(x)}$ consists of modifying the reward function. In particular, the reward exponent hyperparameter $\beta$ could be used to arbitrarily increase the value of high-reward objects, hopefully overwhelming even an exponential amount of low-reward objects. To test this hypothesis, we perform an additional grid sweep for GFNs using higher values of $\beta$. We sweep over the same learning rates $\{0.00001, 0.0001, 0.001\}$ and sampling temperatures $\{1, 4, 16 \}$ but also over $\beta \in \{ 512, 1024, 2048, 4096, 8192\}$. We plot the final average mode reward of the best performing setting for each $\beta$ in Fig.~\ref{fig:high_beta}. 

Going past 256 worsens performance for UTR and AMP but does noticeably improve performance for GFP. However, for all values of $\beta$, GFNs do not manage to reach the best-performing TGM setting. This discrepancy indicates TGM is exploring meaningfully different peaky distributions with better quality and diversity. Surprisingly, GFNs still perform decently even with very high $\beta$ values (up to 8192). It seems that gradient clipping and the VarGrad objective are enough to ensure some level of training stability at these high $\beta$ values. These values yield equally high losses, reaching values in the millions for $\beta=8192$ on GFP.

\begin{figure}[h]
    \centering
  \centerline{\includegraphics[width=1\textwidth]{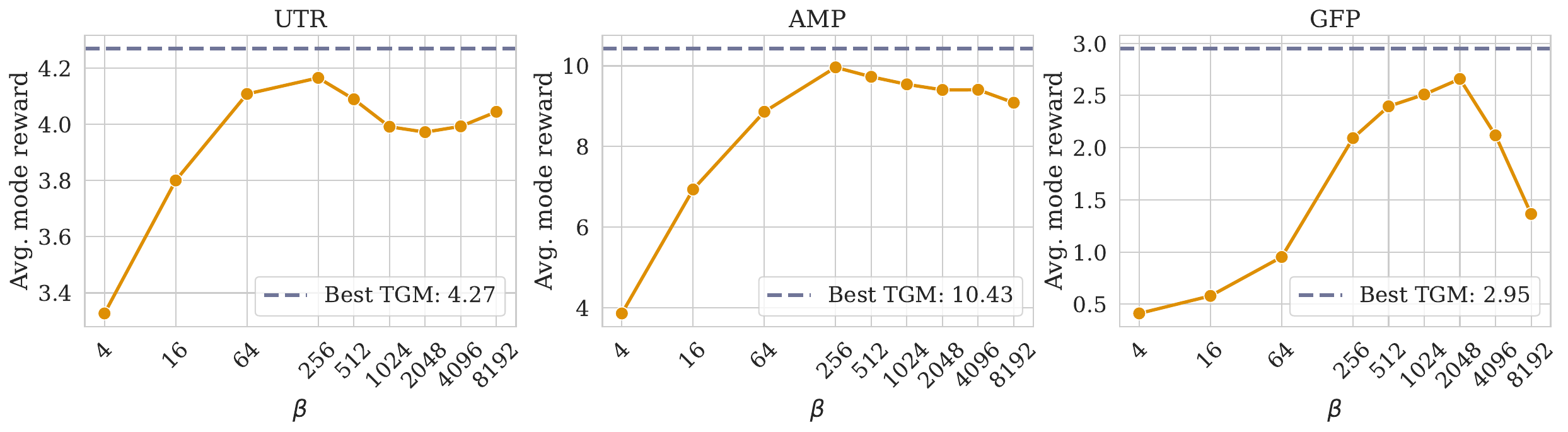}}
  \caption{Average mode reward of best GFN setting for different values of $\beta$ compared to the best performing setting of TGM.}
    \label{fig:high_beta}
\end{figure}

\subsection{Computational complexity}
TGM is a generalization of TB but nearly identical in terms of computational complexity (essentially just requires an additional log-softmax in the loss computation). In particular, for sequence tasks with transformers, by acting on trajectories, TGM/TB are significantly faster than methods that act on transitions. The loss computation only requires a single forward pass per batch of trajectories instead of a forward pass per batch of transitions. We’ve included a table of training speed on an L40s GPU below. TGM/TB are roughly 10x faster than PPO/SAC when the latter splits generated trajectories into minibatches. TGM is only slightly more complicated to implement than GFNs, and simpler to implement than PPO (see the code submission for details).

\begin{table}[h]
\centering
\caption{Wallclock training speed on an L40s for various methods on biological sequence tasks.}
\begin{tabular}{lccc}
\toprule
Algorithm & UTR (samples/s) $\uparrow$ & AMP (samples/s) $\uparrow$ & GFP (samples/s) $\uparrow$ \\
\midrule
SAC & 115 & 92 & 5 \\
PPO & 191 & 144 & 8 \\
TGM/TB & \textbf{1680} & \textbf{1216} & \textbf{120} \\
\bottomrule
\end{tabular}
\label{tab:compute_time}
\end{table}

\clearpage

\section{Accumulation and dilution}
\label{appx:accum_dilution}

We prove the bounds in Tab.~\ref{tab:accum_dilution} below. For convenience, we denote $Q^*_s:=\max_{a \in \children(s)}Q(s,a)$, $k:=|\children(s)|$ and $\beta := \frac{\ttq \alpha+\omega}{\alpha}$.

\paragraph{Soft Bellman:} For the soft Bellman operator, $\Delta_s^{T^{\mathrm{SB}}} \geq 0$ since the logsumexp is always greater than or equal to the max. Then, since the logsumexp is strictly increasing, worse-case accumulation occurs when all the Q-values are equal, in which case $|\Delta_s^{T^{\mathrm{SB}}}| = \frac{1}{\omega}\log k e^{\omega Q^*_s} - Q^*_s = \frac{\log k}{\omega}$. 

\paragraph{Mellowmax:} For the mellowmax operator, $\Delta_s^{T^{\mathrm{MM}}} \leq 0$ since the logmeanexp is bounded by the max input. In the worst-case, we have the following bound $\frac{1}{\omega}\log \sum_{i=1}^k \frac{1}{k}e^{\omega Q_s(a_i)} \leq \frac{1}{\omega}\log \frac{e^{\omega Q^*_s}}{k} = Q^*_s - \frac{\log(k)}{\omega}$. Thus, $|\Delta_s^{T^{\mathrm{MM}}}| = -\Delta_s^{T^{\mathrm{MM}}} \leq \frac{\log(k)}{\omega}$.


\subsubsection{General Mellowmax}
The general mellowmax operator $\Omega_\alpha^{\ttq, *}$ can display accumulation or dilution depending on the value of $\ttq$. To bound $|\Delta_s^{T^{\mathrm{GM}}}|$, we consider the cases $\Delta_s^{T^{\mathrm{GM}}}$ and $-\Delta_s^{T^{\mathrm{GM}}} $ separately. We show bounds for $\alpha >0, \omega >0,\ttq \in (0, 1]$
and $k \in \mathbb{N}$ when $\beta > 1$.

We begin by noting that by factoring out $Q^*_s$ and denoting $\delta_i :=Q(s,a_i) - Q^*_s$, we can rewrite $\Delta_s^{T^{\mathrm{GM}}}$ as follows. Without loss of generality, we assume the first Q-value is the largest, i.e., $Q^*_s = Q(s,a_1)$:
\begin{equation}
\label{eq:gm_Delta}
    \Delta_s^{T^{\mathrm{GM}}} =\frac{1}{\omega} \log \langle {\sigma( \alpha}\mathbf{Q_s}))^\ttq, e^{\omega Q_s} \rangle - Q^*_s = \frac{1}{\omega} \left[ \log(1 + \sum_{i=2}^k e^{(\ttq \alpha + \omega)\delta_i}) - \ttq \log(1 + \sum_{i=2}^ke^{\alpha \delta_i}) \right].
\end{equation}
We can then make the change of variables $y_i=e^{\alpha \delta_i}$, which is continuous and bijective on $[0,+\infty)$. In particular, since $\delta_i \leq 0$, we are only interested in $y_i \in [0,1]$. To simplify our analysis, we consider $g_\beta$ defined as follows
\begin{equation}
    \label{eq:g_beta}
    g_\beta(y_2,\dots,y_k):= \log(1 + \sum_{i=2}^k y_i^\beta) - \ttq \log(1 + \sum_{i=2}^ky_i),
\end{equation}
with $ \frac{1}{\omega}g_\beta(e^{\alpha\delta_2}, \ldots, e^{\alpha\delta_k}) = \Delta_s^{T^{\mathrm{GM}}}$.

\paragraph{Accumulation bound:}
We begin by showing that ~\eqref{eq:g_beta} is maximized at $y_2 = \ldots=y_k = 1$ for $y_i \in [0, 1]$. To see this, we consider the partial derivatives $\frac{\partial g_\beta}{y_i} = \frac{\beta y_i^{\beta-1}}{1+\sum_{i=2}^k y_i^\beta} - \ttq \frac{1}{1\sum_{i=2}^k y_i}$ which is strictly positive for $y_i < 1$ (since $\beta > 1$). As a result, the function is increasing in all the $y_i$ on $[0,1]$, implying the maximum is attained at the boundary. Hence,
\begin{equation*}
    \Delta_s^{T^{\mathrm{GM}}}  \leq \frac{1}{\omega}\ g_\beta(1, \ldots,1) = \frac{1}{\omega} (\log(k) - \ttq \log(k)) = \frac{(1-\ttq) \log(k)}{\omega}.
\end{equation*}



\paragraph{Dilution bound:}
The dilution bound is more complex. Using the above, we have that $-\Delta_s^{T^{\mathrm{GM}}} = -\frac{1}{\omega}g_\beta(y_2,\dots,y_k) \leq -\frac{1}{\omega}\min_{\{y_2, \ldots, y_k \}\in[0,1]^k}g_\beta(y_2,\dots,y_k)$.

\begin{proposition}
\label{prop:equal_x}
    Let $\alpha>0$, $q \in (0,1]$, $\omega > 0$ and $k\in\mathbb{N}$ with $\beta > 1$. The function
    \[
        g_\beta(y_2,\dots,y_k):= \log(1 + \sum_{i=2}^k y_i^\beta) - \ttq \log(1 + \sum_{i=2}^ky_i)
    \]
    attains a unique minimum on $[0, 1]^k$, and at this minimum, $0<y_2=\dots=y_k<1$.
\end{proposition}
\begin{proof}
    It is elementary to compute
    \begin{align*}
        \frac{\partial g_\beta}{\partial y_i}-\frac{\partial g_\beta}{\partial y_j}
        &=
        \beta\left(\frac{y_i^{\beta-1}-y_j^{\beta-1}}{1+\sum_{l=2}^k y_l^\beta}\right).
    \end{align*}
    Since $\beta-1>0$, for $(y_1,\dots,y_k)\in[0,1]^k$, $\frac{\partial g_\beta}{\partial y_i}>\frac{\partial g_\beta}{\partial y_j}$ if and only if $y_i>y_j$. This implies all local minima of $g_\beta$ in $[0,1]^n$ are on the diagonal $y_2=\dots=y_n$.

    To show the minimizer is unique, we now consider 
    
    \[
    h_\beta(y):=g_\beta(y,y,\dots,y)=\log(1+ny^\beta)-\ttq \log(1+ny),   
    \]
  
    where $n:=k-1$. By computation, $h_\beta(0)= 0$ and $h_\beta(1)=(1-\ttq) \log(1+n) > 0$. To show the minimizer is in $y \in (0,1)$ we need a value of $y$ for which
    \[
        \log(1+ny^\beta)-\ttq \log(1+ny) <0 \impliedby \frac{1+ny^\beta}{1+qny} < 1 \iff ny^\beta < qny.
    \]
    where the second inequality comes from an application of Bernoulli's inequality. The last inequality is satisfied for $y=\frac{q^{\frac{1}{\beta-1}}}{2}$. Furthermore, notice that
    \begin{equation}
    \label{eq:h_prime}
        h_\beta'(y)=\frac{\beta ny^{\beta-1}}{1 + ny^\beta} - \ttq \frac{n}{1+ny}=\frac{n[n(\beta - \ttq)y^\beta] + \beta y^{\beta-1}-\ttq]}{(1+ny)(1 + ny^\beta)}.
    \end{equation}
    In particular, $h_\beta'(y) > 0$ for $y \geq 0$. Hence, since $h_\beta(0)= 0$, $h_\beta\left(\frac{q^{\frac{1}{\beta-1}}}{2}\right) < 0$ and $h_\beta(y) >0$ for $y \geq 1$, by continuity, the minimum of $h_\beta$ on $[0,+\infty)$ exists, is attained at least once, and all minimizers are in $(0,1)$. To show uniqueness, we remark that the numerator of ~\eqref{eq:h_prime} is a strictly increasing function of $y$ for $y\geq0$. Thus, $h_\beta'(y) = 0$ has at most one solution.
\end{proof}

From Prop.~\ref{prop:equal_x}, we have that
\[ 
-\min_{\{y_2, \ldots, y_k \}\in[0,1]^k}g_\beta(y_2,\dots,y_k) =- \min_{y \in [0,1]} h_\beta(y) = \max_{y\in[0,1]}- h_\beta(y).
\]
We aim to find an upper bound for $-h_\beta (y)$. For $c\geq0$, $\beta\geq1$, define 
\[
g(x,\beta,c):=-h_\beta(x) =\ttq \log(1+cx)-\log(1+cx^\beta).
\]
\begin{proposition}
    \label{prop:M_bound}
    Let $\ttq \in (0,1]$, $\beta > 1$ and $c >0$. Let $x^*(\beta,c)$ and $M(\beta,c)$ be the argmax and max of $g(\cdot,\beta,c)$ (defined above) for $x \in [0,1]$, respectively. Then, we have that
    \[
    M(\beta,c) \leq \ttq \left(1-\frac1\beta\right)\log(1+c).    
    \]
\end{proposition}

\begin{proof}
By Prop.~\ref{prop:equal_x}, $x^*(\beta,c)$ is well-defined and $0<x^*(\beta,c)<1$ for $\beta>1$ and $c>0$. $x^*(\beta,c)$ is not defined when $\beta=1$ or $c=0$ because $g(\cdot,1,\cdot)$ and $g(\cdot,\cdot,0)$ are constant. Nonetheless, $M(\beta,c)$ is continuous from the right at $c=0$, where it has limit 0, and at $\beta=1$ where it has limit $[q-1]\log(1+cx)$.

We denote by $g_x$, $g_\beta$, $g_c$ the partial derivatives of $g$. The constraint that $g_x(x^*(\beta),\beta,c)=0$ simplifies to
\begin{equation}\label{eq:gxmax}
    \frac{\beta x^*(\beta,c)^{\beta-1}}{1+cx^*(\beta,c)^\beta}=\frac{\ttq}{1+cx^*(\beta,c)}.
\end{equation}
Using this, we can simplify the derivative of $M$ with respect to $c$:
\begin{align*}
    \frac{\partial M(\beta,c)}{\partial c}
    &=g_c(x^*(\beta,c),\beta,c)+\cancelto{0}{g_x(x^*(\beta,c),\beta,c)\frac{\partial x^*(\beta,c)}{\partial c}}\\
    &=\frac{\ttq x^*(\beta,c)}{1+cx^*(\beta,c)}-\frac{x^*(\beta,c)^\beta}{1+cx^*(\beta,c)^\beta}\\
    &=\frac{\ttq x^*(\beta,c)}{1+cx^*(\beta,c)}\left(1-\frac1\beta\right)&\text{(Using (\ref{eq:gxmax})}\\
    &\leq\frac{\ttq}{1+c}\left(1-\frac1\beta\right).&\text{(Using that $x^*(\beta,c)<1$)}
\end{align*}
Integrating this inequality, for $c\geq0$,
\[
    M(\beta,c)
    \leq M(\beta,0)+\int_0^c\frac{\ttq}{1+t}\left(1-\frac1\beta\right)\,dt\\
    =\ttq \left(1-\frac1\beta\right)\log(1+c).
\]
\end{proof}

Using Prop.~\ref{prop:M_bound}, we can plug in our parameters of interest by replacing $\beta$ with $\frac{\ttq \alpha + \omega}{\alpha}$ and $c$ with $k-1$ to get 
\[
M(\beta, c) \leq  \ttq \left( \frac{\ttq \alpha + \omega - \alpha}{\ttq \alpha + \omega} \right) \log(k) \leq \ttq \left(\frac{\omega}{\ttq \alpha + \omega} \right) \log(k)
\]
Finally, getting back to $-\Delta_s^{T^{\mathrm{GM}}}$, we have
\begin{equation*}
    -\Delta_s^{T^{\mathrm{GM}}} \leq -\frac{1}{\omega}\min_{\{y_2, \ldots, y_k \}\in[0,1]^k}g_\beta(y_2,\dots,y_k) = \frac{1}{\omega} \max_{y \in[0,1]}-h_\beta(y) = \frac{M(\beta, k-1)}{\omega} \leq \frac{\log(k)}{\ttq \alpha + \omega}.
\end{equation*}
Notably, this bound allows us to recover a worst-case bound on dilution of $\frac{\log(k)}{\alpha + \omega}$ for the soft mellowmax operator (where $\ttq=1$). Combining bounds, we can solve for the $\ttq$ that yields the same bound for accumulation and dilution for $T^{\mathrm{GM}}$. Doing so yields $\ttq = \frac{\alpha - 2\omega + \sqrt{\alpha^2+4\omega^2}}{2 \alpha}$.




\clearpage

\section{Multi-step uncertainty sets}
\label{appx:multi_uncertainty_set}
While Fig.~\ref{fig:one_step_uncertainty} examines the case of uncertainty sets for a single step, we now seek to explore the values that can be taken by $\delta_i$ throughout a trajectory and how this affects the final uncertainty set. For the entropic regularizer, if $\omega_s$ is the same for all $s$, then $\mathcal{R}_{s_i}$ is the same at every state. However, the final uncertainty set $\mathcal{R}$ for an object $x=s_1s_2\ldots s_n$ is given by $\sum_{i}\mathcal{R}_{s_i}$ (i.e. the Minkowski sum of the individual uncertainty sets). 

We extend the binary generation example to a DCP consisting of a sequence generation task with two tokens (i.e. a binary tree) and length $d$. Then, using Prop~\ref{prop:sum_uncertainty}, in Fig.~\ref{fig:multi_step_uncertainty}, we illustrate how the sum of uncertainty sets change as the depth of the tree increases for different operators.

\begin{proposition}
    \label{prop:sum_uncertainty}
    For a convex regularizer $f$ and associated robust set ${\mathcal{R}_f = \{\delta \in \mathbb{R}^{|\mathcal{A}|}: f^*(\delta) \leq 0 \}}$, we have that:
    \begin{equation}
        \sum_{i=1}^k\mathcal{R}_f = \{r \in \mathbb{R}^{|\mathcal{A}|}: kf^*\left(\frac{r}{k}\right) \leq 0 \}.
    \end{equation}
\end{proposition}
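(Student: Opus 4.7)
The plan is to prove the two set-inclusions of the claimed identity while leveraging the fact that $\mathcal{R}_f$ is convex. Recall that $f^\ast$ is convex whenever $f$ is proper (it is always convex as a pointwise supremum of affine functions), so $\mathcal{R}_f = \{f^\ast \le 0\}$ is a sublevel set of a convex function and hence convex. Since $k>0$, dividing by $k$ inside $f^\ast$ is monotone for the constraint, so the target set may be rewritten as $\{r : f^\ast(r/k)\le 0\} = k\,\mathcal{R}_f$ (the $k$-dilation of $\mathcal{R}_f$). Thus the proposition is equivalent to the well-known identity $\underbrace{\mathcal{R}_f+\cdots+\mathcal{R}_f}_{k\text{ times}} = k\,\mathcal{R}_f$ for convex sets.

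For the forward inclusion $\sum_{i=1}^k \mathcal{R}_f \subseteq \{r : k f^\ast(r/k)\le 0\}$, I would take any $r = \sum_{i=1}^k r_i$ with each $r_i \in \mathcal{R}_f$ (so $f^\ast(r_i)\le 0$), and apply Jensen's inequality to the convex function $f^\ast$ at the average point $r/k = \tfrac{1}{k}\sum_i r_i$:
\begin{equation*}
f^\ast\!\left(\tfrac{r}{k}\right) \;\le\; \tfrac{1}{k}\sum_{i=1}^k f^\ast(r_i) \;\le\; 0,
\end{equation*}
and multiplying by $k>0$ yields $k f^\ast(r/k)\le 0$, as required.

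For the reverse inclusion, suppose $r$ satisfies $kf^\ast(r/k)\le 0$, equivalently $f^\ast(r/k)\le 0$. Then the point $r/k$ lies in $\mathcal{R}_f$, and writing $r = \sum_{i=1}^k (r/k)$ exhibits $r$ as a sum of $k$ elements of $\mathcal{R}_f$. Hence $r \in \sum_{i=1}^k \mathcal{R}_f$.

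There is essentially no hard step here: the only subtlety is recognizing that convexity of $f$ implies convexity of $f^\ast$, which legitimizes the Jensen step, and that the $k$ factor outside $f^\ast(r/k)$ is cosmetic because $k>0$. One could alternatively obtain the result as a direct corollary of Property~\ref{prop: operation conj}(i)–(ii) applied to the scaled function $r\mapsto k f^\ast(r/k)$, but the convex-geometry argument above is the cleanest route.
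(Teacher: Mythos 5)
Your proof is correct and follows essentially the same route as the paper's: Jensen's inequality applied to the convex conjugate $f^*$ for the forward inclusion, and the equal split $\delta_i = r/k$ for the reverse inclusion. The extra framing via the dilation identity $\sum_{i=1}^k \mathcal{R}_f = k\,\mathcal{R}_f$ is a nice observation but does not change the substance of the argument.
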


\begin{proof}
    We seek to show that, for a given $r \in \mathbb{R}^{|\mathcal{A}|}$, there exists $\delta_1, \delta_2 \ldots \delta_k$ such that $\sum_{i = 1}^k \delta_i=r$ and $\delta_i \in \mathcal{R}_f $ if and only if $ kf^*\left(\frac{r}{k}\right) \leq 0$.

    \paragraph{[$\Rightarrow$]} Since $f$ is convex, so is $f^*$. Thus, using Jensen's inequality, we get that:
    \begin{equation*}
        kf^*\left(\frac{r}{k}\right) = kf^*\left(\frac{\sum_i \delta_i}{k}\right) \leq \sum_i f^*(\delta_i) \leq 0,
    \end{equation*}
    where the last inequality stems from $\delta_i \in \mathcal{R}_f  \implies f^*(\delta_i) \leq 0$.
    
    \paragraph{[$\Leftarrow$]} Taking $\delta_i = \frac{r}{k}$ for all $\delta_i$, we have that $\sum_i \delta_i = r$. Then, $kf^*\left(\frac{r}{k}\right) \leq 0 \implies \forall i: f^*(\delta_i) = f^*\left(\frac{r}{k}\right) \leq 0$ and thus $\delta_i \in \mathcal{R}_f$.
\end{proof}
\begin{figure}[h!]
  \centering
  \centerline{\includegraphics[width=1\textwidth]{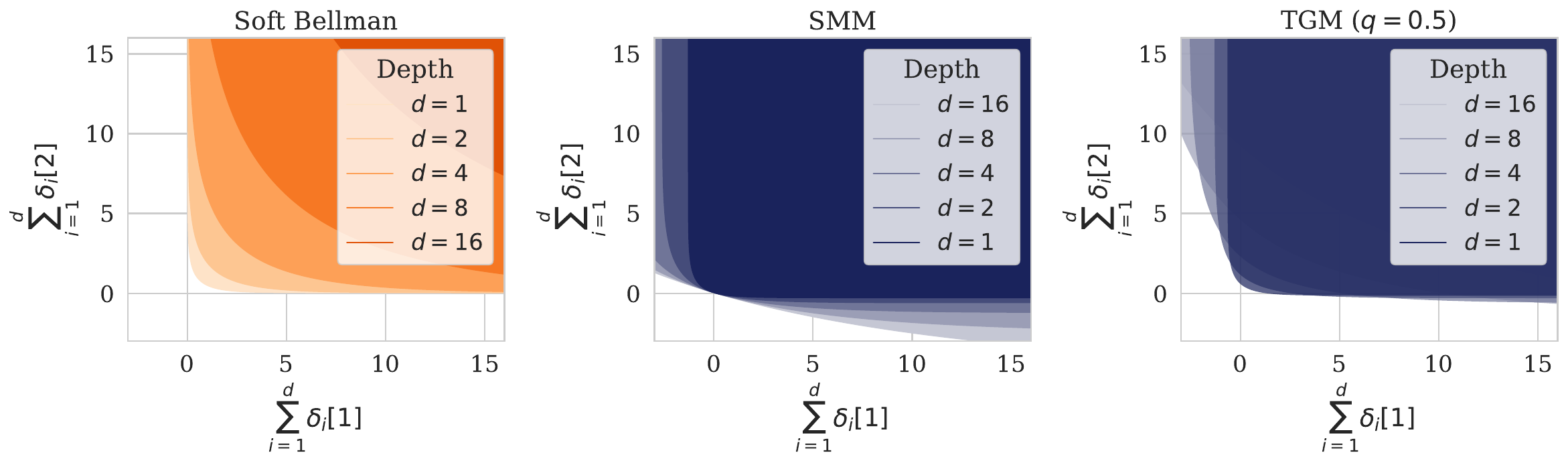}}
  \caption{Illustration of sum of uncertainty sets along a trajectory. (Left) The final uncertainty set shifts farther away from $(0,0)$ as we increase $d$. As such, for objects that are deeper in the tree, the operator is only robust to them having much higher reward than their proxy evaluation. This behavior provides an alternative viewpoint for the preference of the soft Bellman operator for longer objects. (Middle) On the other hand, the SMM operator stays centered around $(0,0)$ and becomes increasingly robust to decreases in rewards. (Right) The final uncertainty set of the GM operator stays close to $(0,0)$ and is robust to some decreases in reward.}
  \label{fig:multi_step_uncertainty}
  \vspace{-3mm}
\end{figure}
\clearpage

\section{Trajectory General Mellowmax}


\label{appx:tgm_theory}

\paragraph{Notation:} We denote by $\children(s)$ the set of possible actions from state $s$. $\sigma_\tau(Q_s) := \frac{e^{\tau Q_s}}{\sum_{a' \in \children (s)}e^{\tau Q_s[a']}}$ is the softmax with inverse temperature $\tau$. For a DCP and a given state $s$, we denote by $s'_a$ the state reached by taking action $a$ (unique since transitions are deterministic). We denote by $E$ the set of edges in $\mathcal{G}$, each of which represents a transition $(s, a, s_a')$. Finally, for convenience, we denote
\begin{align*}
    g(Q_s, a) &:=  \frac{1}{\omega} \log \sigma_{\ttq \alpha + \omega}(Q_{s})[a] - \ttq \log \sigma_\alpha(Q_{s})[a], \\ 
    g^*(Q_s) &:= \frac{1}{\omega} \log \langle \sigma_{ \alpha}(Q_s)^\ttq, e^{\omega Q_s} \rangle.
\end{align*}
    
\subsection{Proof of optimality}
\label{appx:optimality}
\begin{proposition}
    \label{prop:optimality}
    In a DCP $(\mathcal{G}, r)$, for any state $s$,
    \begin{equation}
    \label{eq:operator_max}
        v^*_s := \max_{\pi_s} \langle{\pi_s, Q_s} \rangle - \tfrac{1}{\omega}\Omega_{\alpha}^\ttq(\pi_s) = \frac{1}{\omega} \log \langle \sigma_{ \alpha}(Q_s)^\ttq, e^{\omega Q_s} \rangle = g^*(Q_s),
    \end{equation}
    and 
    \begin{equation}
        \label{eq:optimal_policy}
        \pi_s^* = \sigma_{\ttq \alpha + \omega}(Q_s).
    \end{equation}
    is the unique maximizer of the state-wise regularized problem.
\end{proposition}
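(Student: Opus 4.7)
The proposition is essentially a direct Fenchel-conjugacy computation, and follows from machinery already set up in the appendix. My approach is as follows. By Prop.~\ref{prop:shannon kl}, the convex conjugate of $\Omega^{\ttq}_{d,s}(\pi) = \ttq\,\mathrm{KL}(\pi, d) + (1-\ttq)(-\mathrm{H}(\pi))$ is the weighted log-sum-exp $[\Omega^{\ttq}_{d,s}]^{*}(y) = \log\sum_a d(a)^{\ttq}\, e^{y(a)}$. Applying property (i) of Prop.~\ref{prop: operation conj} to $\tfrac{1}{\omega}\Omega^{\ttq}_{d,s}$ rescales both the argument and the value, giving
\[
\bigl[\tfrac{1}{\omega}\Omega^{\ttq}_{d,s}\bigr]^{*}(Q_s) \;=\; \tfrac{1}{\omega}\log\sum_a d(a)^{\ttq}\, e^{\omega\, Q_s(a)}.
\]
Substituting $d = \sigma_\alpha(Q_s)$ into the right-hand side identifies it with $g^{*}(Q_s) = \tfrac{1}{\omega}\log\langle \sigma_\alpha(Q_s)^{\ttq}, e^{\omega Q_s}\rangle$, which is precisely the claimed value in (\ref{eq:operator_max}).

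\textbf{Extracting the maximizer.} The unique argmax of the inner problem is recovered from the gradient of the conjugate (equivalently, from first-order KKT conditions on the simplex): for the regularizer $\Omega^{\ttq}_{d,s}$, the maximizer of $\langle\pi, y\rangle - \Omega^{\ttq}_{d,s}(\pi)$ is $\pi^{*}(a) \propto d(a)^{\ttq}\, e^{y(a)}$. The $\tfrac{1}{\omega}$ scaling inside the regularizer replaces $y$ by $\omega\, Q_s$, so $\pi^{*}(a) \propto d(a)^{\ttq}\, e^{\omega\, Q_s(a)}$. Plugging in $d = \sigma_\alpha(Q_s)$ gives $d(a)^{\ttq} \propto e^{\ttq\alpha\, Q_s(a)}$, hence $\pi^{*}(a) \propto e^{(\ttq\alpha + \omega)\, Q_s(a)}$, which after normalization is exactly $\sigma_{\ttq\alpha + \omega}(Q_s)(a)$, establishing (\ref{eq:optimal_policy}).

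\textbf{Uniqueness and role of the DCG hypothesis.} Uniqueness of $\pi_s^{*}$ follows from strict concavity of the inner objective: $-\mathrm{H}$ is strictly convex on $\Delta_{\children(s)}$, $\mathrm{KL}(\cdot, d)$ is strictly convex whenever $d$ has full support (automatic for $d = \sigma_\alpha(Q_s)$), and any convex combination with weights $(1-\ttq, \ttq) \in [0,1]^2$ summing to one remains strictly convex; subtracting from a linear functional preserves strict concavity. The DCG hypothesis plays only a self-consistency role: since $\mathcal{G}$ is acyclic, $Q_s$ has no functional dependence on $\pi_s$, so it is legitimate to use the same $Q_s$ both inside the objective and as the reference for $d_s = \sigma_\alpha(Q_s)$ without introducing a circular definition. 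The only real obstacle in the proof is clerical---tracking the $\omega$ rescaling through Prop.~\ref{prop: operation conj}(i) and verifying that the resulting expression cleanly matches $g^{*}$---rather than anything conceptual.
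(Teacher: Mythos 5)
Your proof is correct and follows essentially the same route as the paper's: both hinge on the observation that acyclicity gives $\nabla_{\pi_s}Q_s=0$, so $d_s=\sigma_\alpha(Q_s)$ can be treated as a fixed reference distribution, after which the maximization is a standard tilted-softmax computation and uniqueness follows from strict convexity of the regularizer. The only difference is packaging: the paper redoes the Lagrangian/KKT computation explicitly, whereas you obtain the same value and maximizer by citing the already-established Shannon--KL conjugate (Prop.~\ref{prop:shannon kl}) together with the $\tfrac{1}{\omega}$-scaling rule of Prop.~\ref{prop: operation conj}, which is a slightly cleaner reuse of the appendix machinery.
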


\begin{proof}
    Unlike the case of a generic $\pi$, ~\eqref{eq:operator_max} is not a convex conjugate since $Q_s$ appears in both the dot product and $\Omega$. Nonetheless, $\Omega$ is still convex in $\pi$. Thus, similarly to \citep{nachum2017bridging}, we consider the Lagrangian:
    \begin{equation*}
        L(\pi_s, \lambda) =\langle{\pi_s, Q_s} \rangle - \tfrac{1}{\omega}\Omega_{\alpha}^\ttq(\pi_s) + \lambda (1 - \langle\mathbf{1}, \pi_s \rangle).
    \end{equation*}
    Since $\Omega$ is convex, the overall function to optimize is concave. Given that Slater's condition holds, the KKT conditions are both sufficient and necessary for optimality. Due to the lack of cycle in DCPs, we have that $\nabla_{\pi_s} Q_s = 0$. Hence, the KKT conditions yield the following system of equations:
    \begin{align}
        0 &= Q_s - \frac{1}{\omega} \left(\ttq[\log \pi_s - \log \pi_{Q_s} + 1] + (1-\ttq)[\log \pi_s + 1]\right) -\lambda \\
        1 &= \langle \pi_s, \mathbf{1} \rangle \label{eq:lambda_constraint}
    \end{align}
     where we denote $\pi_{Q_s}:= \sigma_\alpha(Q_s)$. Solving for $\pi_s$ in the first, we get that the optimal policy $\pi_s^*$ is given by:
     \begin{equation*}
     \pi_s^* = \exp ([Q_s - \lambda] \omega + \ttq \log \pi_{Q_s} - 1)
     \end{equation*}
     which ensures that $\pi_s \geq 0$. Replacing the above in ~\eqref{eq:lambda_constraint}, we have that:
     \begin{equation*}
         \lambda^* = \frac{1}{\omega} \left(\log \sum_{a \in \children(s)}e^{(\alpha \ttq + \omega)Q_s[a]}-\ttq \log \sum_{a \in \children(s)}e^{\alpha Q_s[a]} \right)= \frac{1}{\omega} \left( \log  \langle \sigma_\alpha(Q_s) ^\ttq,  e^{\omega_s Q_s} \rangle - 1\right).
     \end{equation*}
     We note the similarity of $\lambda^*$ and the maximal value ~\eqref{eq:operator_max}. Plugging $\lambda^*$ back in to $\pi_s^*$, we get 
    \begin{equation*}
    \tag{\ref{eq:optimal_policy}}
    \pi_s^* = \sigma_{\ttq \alpha + \omega}(Q_s).
    \end{equation*}
    Finally, plugging $\pi_s^*$ in $ \langle{\pi_s, Q_s} \rangle - \tfrac{1}{\omega}\Omega_{\alpha}^\ttq(\pi_s)$ with some straightforward algebraic manipulations, we obtain ~\eqref{eq:operator_max}. Since both the KL divergence and the negative entropy are strictly convex, ~\eqref{eq:optimal_policy} is the unique maximizer.
\end{proof}

\begin{corollary}
    For any DCP $(\mathcal{G}, \Phi)$, the GM operator has unique optimal value/policy functions $(v^*, \pi^*)$.
\end{corollary}
\begin{proof}
     We show this by induction on an inverse topological sorting $s_0,s_1 \ldots, s_n$ of the states of $\mathcal{G}$. For a state with no children, the optimal value is defined to be 0 and there is no policy. Then, assume each state $s_i, i<N$ has a unique optimal value/policy. By the properties of the topological sorting, all the children of $s_N$ have a unique optimal value/policy. Then, by Prop.~\ref{prop:optimality}, $(v^*_{s_N}, \pi^*_{s_N})$ is also uniquely defined.
\end{proof}

\begin{lemma}
\label{lemma:f_g_equality}
For any $Q_s$ and $a$, we have that
    \begin{equation}
    \label{eq:f_g_equality}
        g(Q_S, a) = Q_s[a] - g^*(Q_s).
    \end{equation}
\end{lemma}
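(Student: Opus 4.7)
\textbf{Proof plan for Lemma~\ref{lemma:f_g_equality}.} The claim is a direct algebraic identity, so the plan is simply to expand both sides and match them. Concretely, writing $Z_1(Q_s) := \sum_{a'\in\children(s)} e^{(\ttq\alpha+\omega) Q_s[a']}$ and $Z_2(Q_s) := \sum_{a'\in\children(s)} e^{\alpha Q_s[a']}$, the first step is to replace each softmax by its log form:
\begin{align*}
\log \sigma_{\ttq\alpha+\omega}(Q_s)[a] &= (\ttq\alpha+\omega) Q_s[a] - \log Z_1(Q_s),\\
\log \sigma_{\alpha}(Q_s)[a] &= \alpha Q_s[a] - \log Z_2(Q_s).
\end{align*}
Plugging these into the definition $g(Q_s,a) = \tfrac{1}{\omega}\bigl(\log \sigma_{\ttq\alpha+\omega}(Q_s)[a] - \ttq \log \sigma_\alpha(Q_s)[a]\bigr)$ used in Thm.~\ref{thm:traj_gm}, the $\ttq \alpha$ contributions cancel in the coefficient of $Q_s[a]$, leaving
\begin{equation*}
g(Q_s,a) = Q_s[a] - \tfrac{1}{\omega}\log Z_1(Q_s) + \tfrac{\ttq}{\omega}\log Z_2(Q_s) = Q_s[a] - \tfrac{1}{\omega}\log\!\bigl(Z_1(Q_s)/Z_2(Q_s)^{\ttq}\bigr).
\end{equation*}

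The second step is to recognize the remaining $a$-independent quantity as $g^*(Q_s)$. Expanding $\sigma_\alpha(Q_s)^{\ttq}$ inside the inner product,
\begin{equation*}
g^*(Q_s) = \tfrac{1}{\omega}\log\sum_{a'} \sigma_\alpha(Q_s)[a']^{\ttq}\, e^{\omega Q_s[a']} = \tfrac{1}{\omega}\log\frac{\sum_{a'} e^{(\ttq\alpha+\omega) Q_s[a']}}{Z_2(Q_s)^{\ttq}} = \tfrac{1}{\omega}\log\!\bigl(Z_1(Q_s)/Z_2(Q_s)^{\ttq}\bigr),
\end{equation*}
which matches the expression obtained for the correction term in $g(Q_s,a)$. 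Combining the two displays yields the desired identity $g(Q_s,a)=Q_s[a]-g^*(Q_s)$.

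\textbf{Main obstacle.} There is essentially no conceptual obstacle: the result is a bookkeeping identity that falls out of the cancellation of the $\ttq\alpha$ terms in the numerator of the two softmaxes. The only subtlety worth flagging is notational consistency between the definition of $g$ stated in Thm.~\ref{thm:traj_gm} (where $\tfrac{1}{\omega}$ multiplies the whole bracket) and the appendix shorthand; the proof above uses the former. Once that is fixed, the identity recovers the convex-conjugate form of Prop.~\ref{prop:shannon kl}, i.e., the weighted log-sum-exp expression for $g^*$, and this is precisely what makes the trajectory consistency telescope cleanly in the subsequent proof of Thm.~\ref{thm:traj_gm}.
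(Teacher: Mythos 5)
Your proof is correct and follows essentially the same route as the paper's: expand both softmax logarithms, cancel the $\ttq\alpha Q_s[a]$ contributions, and recognize the residual $a$-independent log-ratio as $g^*(Q_s)$. Your side remark about the placement of $\tfrac{1}{\omega}$ in the appendix shorthand versus the theorem statement is apt — the paper's own computation uses the version with $\tfrac{1}{\omega}$ multiplying the whole bracket, exactly as you do.
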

\begin{proof}
    By simple algebra,
    \begin{align*}
        &g(Q_s, a)\\
        &=\tfrac{1}{\omega} (\log \sigma_{\ttq \alpha + \omega}(Q_s)[a] - \ttq \log \sigma_\alpha(Q_s)[a])\\
        &=\tfrac{1}{\omega} \left( (\ttq \alpha + \omega) Q_s[a] - \ttq \alpha Q_s[a] \right ) \\
        &- \frac{1}{\omega}\left(\log \sum_{a' \in \children(s)} e^{(\ttq \alpha + \omega) Q_s[a']}  -  \log \left[\sum_{a' \in \children(s)} e^{\alpha Q_s[a']} \right]^\ttq \right) \\
        &= \frac{1}{\omega}(\omega Q_s[a]) - \frac{1}{\omega} \log \left[ \sum_{a' \in \children(s)}\frac{ e^{\ttq\alpha Q_s[a']} e^{\omega Q_s[a']} }{\left(\sum_{a' \in \children(s)} e^{\alpha Q_s[a']}\right)^\ttq } \right]\\
        &=Q_s[a] - \frac{1}{\omega} \left( \log  \langle \sigma_\alpha(Q_s) ^\ttq,  e^{\omega_s Q_s} \rangle \right)\\
        &=Q_s[a] - g^*(Q_s)
    \end{align*}
\end{proof}

\begin{lemma}
    \label{lemma:equivalence}
    For any state $s$ in a DCP, the following are equivalent:
    \begin{align}
        \label{eq:operator_value}
        \sigma_{\alpha \ttq + \omega} (\bQ_s) &= \arg \max_{\pi_s} \langle{\pi_s, Q^*_s} \rangle - \tfrac{1}{\omega}\Omega_{\sigma_\alpha(Q^*_s), s}^\ttq(\pi_s),\\ 
        \label{eq:transition_constraint}
        \forall a \in \children(s):  v_{s_a'}^* &=v_s^* + g(\bQ_s,a)  -  r(s,a).
    \end{align}
\end{lemma}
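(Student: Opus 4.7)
The strategy is to reduce the equivalence to a shift-invariance argument on $\bar Q_s$, relying on the two ingredients already established: the explicit optimal policy from Prop.~C.1 and the identity $g(Q_s,a) = Q_s[a] - g^*(Q_s)$ from Lem.~C.2. Throughout, I use the notational convention $Q^*_s[a] = v^*_{s'_a} + r(s,a)$ and the fact that $v^*_s = g^*(Q^*_s)$ (Prop.~C.1 applied to the optimal value function).

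\textbf{Step 1 (rewrite (5) as a shift condition).} By Prop.~C.1, the right-hand side of (\ref{eq:operator_value}) has a \emph{unique} maximizer, equal to $\sigma_{\alpha\ttq+\omega}(Q^*_s)$. Since the softmax is invariant under constant shifts of its argument, the equality $\sigma_{\alpha\ttq+\omega}(\bar Q_s) = \sigma_{\alpha\ttq+\omega}(Q^*_s)$ holds if and only if $\bar Q_s = Q^*_s + c\mathbf{1}$ for some scalar $c \in \mathbb{R}$. So (\ref{eq:operator_value}) is equivalent to $\bar Q_s - Q^*_s$ being constant across $a \in \children(s)$.

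\textbf{Step 2 ((6) is also a shift condition).} Using Lem.~C.2 applied to $\bar Q_s$, the constraint (\ref{eq:transition_constraint}) becomes
\begin{equation*}
v^*_{s'_a} = v^*_s + \bar Q_s[a] - g^*(\bar Q_s) - r(s,a), \qquad \forall a \in \children(s).
\end{equation*}
Substituting $v^*_{s'_a} = Q^*_s[a] - r(s,a)$ and $v^*_s = g^*(Q^*_s)$ and rearranging, this is equivalent to $\bar Q_s[a] - Q^*_s[a] = g^*(\bar Q_s) - g^*(Q^*_s)$ for every $a \in \children(s)$, i.e., again the statement that $\bar Q_s - Q^*_s$ is a constant vector (with explicit constant $c = g^*(\bar Q_s) - g^*(Q^*_s)$).

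\textbf{Step 3 (conclude).} Steps 1 and 2 show that both (\ref{eq:operator_value}) and (\ref{eq:transition_constraint}) are equivalent to the same statement, namely $\bar Q_s - Q^*_s \in \mathbb{R}\mathbf{1}$. Hence they are equivalent to each other. The main subtlety to handle cleanly is bookkeeping: one must consistently interpret $Q_s$ on the right-hand side of (\ref{eq:operator_value}) as $Q^*_s$ (built from the optimal values $v^*$), and avoid conflating it with the ``candidate'' vector $\bar Q_s$ being tested. Once this is fixed, both directions reduce to algebraic manipulations of Lem.~C.2 and the shift-invariance of softmax, with no fixed-point or contraction argument needed at this stage.
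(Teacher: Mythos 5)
Your proposal is correct and follows essentially the same route as the paper's proof: both rest on the uniqueness of the maximizer from Prop.~\ref{prop:optimality}, the identity $g(Q_s,a)=Q_s[a]-g^*(Q_s)$ from Lemma~\ref{lemma:f_g_equality}, and shift-invariance of the softmax, with the paper's $[\Leftarrow]$ direction being exactly your Step~2 computation deriving $\bQ_s = Q^*_s + c\mathbf{1}$. Your packaging (reducing both conditions to the common shift condition $\bQ_s - Q^*_s \in \R\mathbf{1}$) is a slightly more symmetric presentation than the paper's two separate directions, and only leaves implicit the one-line fact $g^*(Q_s + c\mathbf{1}) = g^*(Q_s) + c$ needed to close the converse of Step~2 --- a fact the paper itself uses without proof elsewhere.
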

\begin{proof}
    \mbox{}
    \paragraph{[~\eqref{eq:operator_value} $\Rightarrow$ ~\eqref{eq:transition_constraint}]} Since $\pi^*_s$ is unique, by ~\eqref{eq:operator_value} and ~\eqref{eq:optimal_policy}, by the properties of the softmax, we have that $\sigma_{\alpha \ttq + \omega} (\bQ_s) = \sigma_{\ttq \alpha + \omega}(Q^*_s)$ and $\sigma_{\alpha} (\bQ_s) = \sigma_{\alpha}(Q^*_s)$. Then, for any $a \in \children (s)$:
    \begin{align*}
    & v_s^* + g(\bQ_s,a)\\
    &= v_s^* + \tfrac{1}{\omega} \left( \log \sigma_{\ttq \alpha + \omega}(\bQ_s)[a] - \ttq \log \sigma_\alpha(\bQ_s)[a]\right)\\
    &= \tfrac{1}{\omega} \left( \log  \langle \sigma_\alpha(Q^*_s) ^\ttq,  e^{\omega_s Q^*_s} \rangle + \log \sigma_{\ttq \alpha + \omega}(Q^*_s)[a] - \ttq \log \sigma_\alpha(Q^*_s)[a] \right) & \text{Using ~\eqref{eq:operator_max}.}\\
    &= \tfrac{1}{\omega} \left( \log \frac{\sum_{a' \in \children(s)}e^{(\ttq \alpha + \omega) Q^*_s[a']}}{\sum_{a' \in \children(s)}e^{\alpha Q^*_s[a']}} + \log \sigma_{\ttq \alpha + \omega}(Q^*_s)[a] - \ttq \log \sigma_\alpha(Q^*_s)[a] \right)\\
    &= \frac{1}{\omega}\left( \log e^{(\ttq\alpha + \omega)Q^*_s[a]} - \ttq \log e^{\alpha Q^*_s[a]} \right)\\
    &= Q^*_s[a]\\
    &= v^*_{s'_a} + r(s,a) \hspace{10em} \text{Using the definition of $Q_s[a]$.}\\
    \end{align*}

    \paragraph{[~\eqref{eq:operator_value} $\Leftarrow$ ~\eqref{eq:transition_constraint}]} For any $a \in \children (s)$, we have that: 
    \begin{align*}
        v_{s_a'}^* &=v_s^* + g(\bQ_s, a) -  r(s,a)\\
        \iff v_{s_a'}^* &= v_s^* + \bQ_s[a] - g^*(\bQ_s) - r(s,a) & \text{ Using ~\eqref{eq:f_g_equality}}\\
        \iff v^*_s - g^*(\bQ_s) &= Q^*_s[a] - \bQ_s[a]
    \end{align*}
    Since the above equality holds for all actions and $v^*_s$ and $g^*(\bQ_s$) do not depend on $a$, we have that $Q^*_s[a] - \bQ_s[a]$ must be equal for all actions. Thus, $\forall a \in \children(s): Q^*_s[a] - \bQ_s[a] = c$ which implies
    \begin{equation}
        \label{eq:constant_value}
        \bQ_s[a] = Q^*_s[a] - c,
    \end{equation}
    for some action independent value $c$. Finally, we have 
    \begin{equation*}
        \sigma_{\alpha \ttq + \omega} (\bQ_s) = \sigma_{\alpha \ttq + \omega} (Q^*_s - c ) = \sigma_{\alpha \ttq + \omega} (Q^*_s),
    \end{equation*}
    by the properties of the softmax.
\end{proof}

\subsection{Proof of Theorem ~\ref{thm:traj_gm}}
We note that it is straightforward to derive the equivalent subtrajectory constraint.
\label{appx:traj_gm_proof}

\begin{theorem}[Trajectory GM]
    For any DCP $(\mathcal{G}, \Phi)$, let $(v^*, \pi^*)$ be the unique optimal value/policy functions for the GM operator. Then, for a given Q-function $\bQ$, 
    \begin{equation}
        \tag{\ref{eq:policy_optimality}}
        \sigma_{\alpha \ttq + \omega} (\bQ_s) = \arg \max_{\pi_s} \langle{\pi_s, Q_s} \rangle - \tfrac{1}{\omega}\Omega_{\alpha}^\ttq(\pi_s),
    \end{equation}
    holds for all states $s$ if and only if
    \begin{equation}
        \tag{\ref{eq:traj_consistency}}
        v_0^* + \sum_{i=0}^n \underbrace{\frac{1}{\omega} \left( \log \sigma_{\ttq \alpha + \omega}(\bQ_{s_i})[a_i] - \ttq \log \sigma_\alpha(\bQ_{s_i})[a_i] \right)}_{g(\bQ_{s_i},a_i)} - r(s_i,a_i) = 0,
    \end{equation}
    holds for all full trajectories $s_0 \xrightarrow{a_0}\cdots \xrightarrow{a_{n-1}} s_n \xrightarrow{a_{n}} x$ in the DCP.
\end{theorem}

\begin{proof}
    Similarly to ~\citep{nachum2017bridging}, we aim to show that optimality implies consistency and vice versa. A refresher of relevant notation can be found at the start of App.~\ref{appx:tgm_theory}.
    \paragraph{Optimality implies consistency}[~\eqref{eq:policy_optimality} $\Rightarrow$ ~\eqref{eq:traj_consistency}] From Lemma \ref{lemma:equivalence}, for any given trajectory, we have that ~\eqref{eq:transition_constraint} holds for all transitions in the trajectory. Then, ~\eqref{eq:traj_consistency} follows straightforwardly by expanding ~\eqref{eq:transition_constraint} over that trajectory in a recursive manner.
    
    \paragraph{Consistency implies optimality}[~\eqref{eq:policy_optimality} $\Leftarrow$ ~\eqref{eq:traj_consistency}] Consider an inverse topological sorting $s^1, s^2 \ldots s^M$ of $\mathcal{G}$ such that for any state $s^i$, any child $s^j$ of $s^i$ has index $j < i$.
    
    Now, we seek to show by induction over this sorting that the following holds for any state $s$ and subtrajectory $s_0 \rightarrow s_1 \ldots \rightarrow s_n \rightarrow s$ ending in $s$:
    \begin{equation}
    \label{eq:subtraj_consistency}
        v_0^* + \sum_{i=0}^n \frac{1}{\omega} \left( \log \sigma_{\ttq \alpha + \omega}(\bQ_{s_i})[a_i] - \ttq \log \sigma_\alpha(\bQ_{s_i})[a_i] \right) - r(s_i,a_i) = v^*_s.
    \end{equation}
    For the base case $s^1$, $s^1$ must be a leaf and belong to $\mathcal{X}$. Then, by assumption $v_{s^1}=0$ and ~\eqref{eq:subtraj_consistency} holds by ~\eqref{eq:traj_consistency}. Now, suppose that ~\eqref{eq:subtraj_consistency} holds for any $s^i$, $i < N$. Then, by the definition of the sorting, for all $a \in \children (s^N)$, ~\eqref{eq:subtraj_consistency} holds for $s_a$ where $s^N \xrightarrow{a}s_a$.

    Let $s_0 \rightarrow s_1 \ldots \rightarrow s_n \rightarrow s^N$ be a trajectory ending in $s^N$. From the induction hypothesis, $\forall a \in \children (s):$
    \begin{align*}
        v_0^* + \sum_{i=0}^n g(\bQ_{s_i}, a_i) - r(s_i,a_i) + g(\bQ_{s^N}, a) - r(s^N, a)  &= v^*_{s_a}\\
        v_0^* + \sum_{i=0}^n g(\bQ_{s_i}, a_i) - r(s_i,a_i) -g^*(\bQ_{s^N}) &= Q^*_{s_N}[a] - \bQ_{s_N}[a] & \text{ Using ~\eqref{eq:f_g_equality}}.
    \end{align*}
    Similarly to the proof of Lemma~\ref{lemma:f_g_equality}, we have that the LHS is independent of $a$ but must be equal to $Q^*_{s_N}[a] - \bQ_{s_N}[a]$ for all $a$. As such, we are guaranteed
    \begin{equation*}
        \bQ_{s_N}[a] = Q^*_{s_N}[a] - c        
    \end{equation*}
    for some action independent value $c$ and all $a \in \children (s^N)$. Then, it is straightforward to compute $g^*(\bQ_{s^N})=g^*(Q^*_{s_N}- c ) = g^*(Q^*_{s_N}) - c$. Plugging both back in and canceling $c$, we get
    \begin{equation*}
        v_0^* + \sum_{i=0}^n g(\bQ_{s_i}, a_i) - r(s_i,a_i) =g^*(Q^*_{s_N}) = v^*_{s^N}
    \end{equation*}
    and the induction is complete. Finally, it remains to show that $\sigma_{\alpha\ttq + \omega}(\bQ_s)$ is optimal for all $s$. If $s$ has no children, optimality holds trivially. If $s$ has at least one child $a$, take a trajectory $s_0 \rightarrow s_1 \ldots \rightarrow s_n \rightarrow s$ ending in $s$. From ~\eqref{eq:subtraj_consistency}, we have that both:
    \begin{align*}
        v_0^* + \sum_{i=0}^n g(\bQ_{s_i}, a_i) - r(s_i,a_i)  &= v^*_{s}\\
        \forall a \in \children (s): v_0^* + \sum_{i=0}^n g(\bQ_{s_i}, a_i) - r(s_i,a_i) + g(\bQ_{s}, a) - r(s, a)  &= v^*_{s_a}.
    \end{align*}
    By subtracting the first from the second, we get that, $\forall a \in \children (s)$:
    \begin{align*}
        g(\bQ_{s}, a) - r(s, a)  &= v^*_{s_a} - v^*_{s}\\
        v^*_{s} + g(\bQ_{s}, a) - r(s, a) &= v^*_{s_a}.
    \end{align*}
    By Lemma~\ref{lemma:equivalence}, $\sigma_{\alpha\ttq + \omega}(\bQ_s)$ is thus optimal at $s$ and we are done.
\end{proof}

\clearpage

\section{Experimental details}
\label{appx:exp_details}

\subsection{Algorithms}
We follow most of the hyperparameter decisions of \citep{malkin2022trajectory}. For each method, we train a transformer with 3 layers, 8 attention heads, and an embedding dimension of 64. We set dropout to 0.1 and use causal masking. Each output head uses a fully-connected network with 2 hidden layers of dimensions 256.

We use weight decay of $10^{-4}$ and use gradient clipping with a threshold of $10$ to help stabilize training at higher $\beta$ values. When generating samples for training, we use a policy corresponding to a mixture of the method's policy and a uniform policy (with weight $0.01$). Each method is trained with Adam \citep{kingma2014adam} with $\epsilon=10^{-5}$.

\paragraph{TGM/GFN:} Since GFNs correspond to a specific hyperparameter setting for TGM, training details are essentially identical between the two methods. Both methods are trained online with a batch size of $16$ trajectories and have a single head that outputs logits. We sample trajectories by taking the tempered softmax of the logits, using the temperature $\alpha \ttq + \omega$ from the optimal policy ~\eqref{eq:optimal_policy}. For GFNs, $\omega$ affects the sampling policy but \textit{not the actual training objective} as this ensures the network is learning the actual GFN objective.

\paragraph{SAC:} For SAC, we use three networks: two Q networks (to reduce overestimation bias) and a policy network, following the implementation of \citep{christodoulou1910soft, huang2022cleanrl}. We use a replay buffer of size \numprint{100000} and a batch size of \numprint{1024} transitions. Generation/training is adjusted to have a replay ratio of 1.0. The entropy coefficient is fixed at $1/\omega$, and we use target networks (for the Q networks), which get updated every 10 iterations.

\paragraph{PPO:} For PPO, we use two networks: a policy network and a value network, following the implementation of \citep{huang2022cleanrl}. We follow the hyperparameter settings of \citep{huang2022cleanrl} and use a generalized advantage estimator $\lambda$ of $0.95$ and $\gamma=1$ to avoid discounting. The clipping epsilon is $0.1$ and the value coefficient is $0.5$. During online training, 16 trajectories are generated, and the resulting transitions are split into 4 minibatches for training. The entropy coefficient is set to $1/\omega$.

\subsection{Sequences}
For each sequence generation task, we add the following 3 tokens to the vocabulary: BOS, PAD, EOS. Sequences are padded to be of length equal to the maximum length for the task, with an added BOS, EOS. Sequences look like the following: $[\text{BOS}, x_1, \ldots, x_n, \text{PAD}, \ldots,\text{PAD}, \text{EOS}]$. When generating sequences, they start with BOS, and tokens are added until a terminating action is chosen, at which point PAD/EOS are added automatically. The terminating action is masked until the task's minimum length is reached.

\subsection{Synthetic tasks}
\label{appx:exp_synthetic}
\paragraph{TF-Bind-8.} Hyperparameters are as described in the paper. We sample \numprint{10000} sequences for each method using their respective optimal policy. Fig.~\ref{fig:synthetic} is the kernel density estimate computed using these samples with the default parameters used by seaborn \citep{Waskom2021}.

\paragraph{Bit sequence.} We use $n=120$ and $k=8$ with $M=60$ modes and force sequences to be of length \numprint{120} through action masking. The modes are sampled randomly and fixed across runs to ensure there is no variation from certain runs having harder-to-find modes. Each method is trained for \numprint{200000} samples.

\subsection{Biological sequence design}
For each task, once the proxy reward model is trained, we compute the mean and standard deviation of the logits on the validation set. Then, we use the following normalized value as reward

\begin{equation}
\label{eq:reward_normalization}
r(x) = \beta\Phi(x) = \beta \frac{\phi(x) -\mu_{\text{val}}}{\sigma_{\text{val}}} ,
\end{equation}
where $\phi(x)$ is the logit output by the trained model. To train the proxy reward models, we follow the hyperparameters used by \citep{malkin2022trajectory}.

\label{appx:exp_bio_design}
\paragraph{UTR:} Using the described dataset, we train a transformer with 4 layers, 8 attention heads, and an embedding dimension of 64.  80\% of the data is used for training and the rest for validation. We train for 250 epochs with early stopping (using a patience of 15). The learning rate is set to $10^{-4},$ and we use a batch size of 128 and weight decay of 1e-6. We set both the minimum and maximum length to 50 since all sequences in the dataset are of length 50.
\paragraph{AMP:} We use the same hyperparameters for training the model as UTR. Since the dataset has sequences of variable length, we set the minimum length to 14 and the maximum length to 60. Unlike the other tasks, the model is trained for binary classification, and we use the logit passed to the sigmoid as $\phi(x)$. 
\paragraph{GFP:} GFP is the largest task. We train a transformer with 3 layers, 8 attention heads, and an embedding dimension of 128. Since all sequences are of length 237, we set both the minimum length and maximum length to 237.

\subsubsection{Evaluation}
For evaluation of the average mode reward, we sweep over inverse temperature modifiers $t \in \{0.005, 0.01, 0.02, 0.05, 0.1, 0.2, 0.5, 1, 2, 5\}$ (i.e. if the regular sampling policy was $\pi \propto e^{\tau x}$, we sample using $\pi' \propto e^{\tau t x}$) and generate \numprint{512} samples for each. We then aggregate the samples and greedily find the top 100 samples such that they are at least at a distance $\delta$ of each other. We use the Levenshtein edit distance as metric \citep{levenshtein1966binary} and set $\delta = 0.25 \cdot\frac{\text{minLen} + \text{maxLen}}{2}$ (i.e. $\delta = 13$ for UTR, $\delta=10$ for AMP and $\delta=60$ for GFP). The constraint can be roughly interpreted as requiring at least 25\% of a sequence to be different for it to be considered distinct. In Tab.~\ref{tab:biogen_results}, we report the max of this value achieved over the course of training.

\subsection{Compute resources}
Experiments were conducted on a cluster with a mix of GPUs. All experiments were run using jobs that requested 1xL40s, 24G of RAM, and 4 CPUs. The cluster allows for dozens of jobs to be run in parallel. Overall, the experiments (including all the hyperparameter sweeps, baselines, and various seeds) used roughly:
\begin{itemize}
    \item \textbf{Bit Sequence:} 35 GPU hours.
    \item \textbf{UTR:} 48 GPU hours.
    \item \textbf{AMP:} 48 GPU hours.
    \item \textbf{GFP:} 300 GPU hours.
\end{itemize}
for a total of roughly $\approx 20$ GPU days. In particular, the entire codebase uses Jax \citep{jax2018github} and a parallelized environment allowing for fast generation. Since we use transformers, training on a generated trajectory for TGM/GFNs only requires a single forward/backward pass. This significantly speeds up training (roughly 4x faster) compared to the PPO/SAC baselines, which operate on individual transitions (though there are potential engineering optimizations to improve the speed of these baselines). 


\end{document}